\newcommand{\R}{\mathbb{R}}
\renewcommand{\vec}[1]{\mathbf{#1}}
\newcommand{\mat}[1]{\mathbf{#1}}
\newcommand{\transpose}{^\top}
\newcommand{\abs}[1]{\left|#1\right|}
\newcommand{\norm}[2][]{\left\|#2\right\|_{#1}}
\renewcommand{\d}[3][]{\frac{\dd^{#1} #2}{\dd #3^{#1}}}
\newcommand{\pd}[3][]{\frac{\partial^{#1} #2}{\partial #3^{#1}}}
\newcommand{\dd}{\mathrm{d}}
\newcommand{\nrd}[1][]{\(#1^\text{rd}\)}
\newcommand{\nth}[1][]{\(#1^\text{th}\)}
\newcommand{\third}{\nrd[3]}
\newcommand{\bydef}{:=}
\newcommand{\etal}{et~al.~}
\newtheorem  {theorem}      {Theorem}
\numberwithin{theorem}      {section}
\numberwithin{lemma}        {section}
\numberwithin{corollary}    {section}
\numberwithin{definition}   {section}
\newcommand{\fwddiff}{\Delta_{\text{fwd}}}
\newcommand{\uref}{\vec{u}_{\textnormal{ref}}}
\newcommand{\tlyap}{T_{\textnormal{Lyap}}}
\newcommand{\optdisc}{optimise-then-discretise\xspace}
\newcommand{\discopt}{discretise-then-optimise\xspace}
\newcommand{\Optdisc}{Optimise-then-discretise\xspace}
\newcommand{\Discopt}{Discretise-then-optimise\xspace}
\newcommand{\ks}{Kuramoto-Sivashinsky\xspace}
\begin{document}

\title{Comparison of neural closure models for discretised PDEs}
\author[3]{Hugo Melchers\footnote{Work performed while at Centrum Wiskunde \& Informatica}}
\author[1,2]{Daan Crommelin}
\author[3]{Barry Koren}
\author[3]{Vlado Menkovski}
\author[1,*]{Benjamin Sanderse}
\affil[1]{
    Centrum Wiskunde \& Informatica, Science Park 123, 1098 XG Amsterdam, The Netherlands
}
\affil[2]{
    Korteweg-de Vries Institute for Mathematics, University of Amsterdam, Science Park 105-107, 1098 XG, The Netherlands
}
\affil[3]{
    Eindhoven University of Technology, De Zaale, 5600 MB, Eindhoven, The Netherlands
}
\affil[*]{
    Corresponding author. E-mail address: \url{mailto:b.sanderse@cwi.nl}
}

\maketitle

\begin{abstract}
    Neural closure models have recently been proposed as a method for efficiently approximating small scales in
    multiscale systems with neural networks.
    The choice of loss function and associated training procedure has a large effect on the accuracy and stability of
    the resulting neural closure model.
    In this work, we systematically compare three distinct procedures: ``derivative fitting'', ``trajectory fitting'' with \discopt,
    and ``trajectory fitting'' with \optdisc.
    Derivative fitting is conceptually the simplest and computationally the most efficient approach and is found to perform reasonably well on one of the test problems (Kuramoto-Sivashinsky) but poorly on the other (Burgers).
    Trajectory fitting is computationally more expensive but is more robust and is therefore the preferred approach.
    Of the two trajectory fitting procedures, the \discopt approach produces more accurate models than the \optdisc approach.
    While the \optdisc approach can still produce accurate models, care must be taken in choosing the length of the
    trajectories used for training, in order to train the models on long-term behaviour while still producing reasonably
    accurate gradients during training.
    Two existing theorems are interpreted in a novel way that gives insight into the long-term accuracy of a neural
    closure model based on how accurate it is in the short term.
\end{abstract}

\section*{Abbreviations}
\begin{tabular}{ll}
    \textbf{AD  } & Automatic differentiation \\
    \textbf{CNN } & Convolutional neural network \\
    \textbf{FOM } & Full-order model \\
    \textbf{LES } & Large eddy simulation \\
    \textbf{MOR } & Model order reduction \\
    \textbf{MSE } & Mean-square error \\
    \textbf{NN  } & Neural network \\
    \textbf{ODE } & Ordinary differential equation \\
    \textbf{PDE } & Partial differential equation \\
    \textbf{POD } & Proper orthogonal decomposition \\
    \textbf{RANS} & Reynolds-averaged Navier-Stokes \\
    \textbf{RMSE} & Root-mean-square error \\
    \textbf{RNN } & Recurrent neural network \\
    \textbf{ROM } & Reduced-order modelling \\
    \textbf{VPT } & Valid prediction time
\end{tabular}

\section{Introduction}
\label{sec:intro}

A number of real-world phenomena, such as fluid flows, can be modelled numerically as a system of partial differential
equations (PDEs).
Such PDEs are typically solved by discretising them in space, yielding ordinary differential equations (ODEs) over a
large number of variables.
These full-order models (FOMs) are generally very accurate, but can be computationally expensive to solve.
A remedy against this high computational cost is to use `truncated' models. These do not directly resolve
all spatial and/or temporal scales of the true solution of the underlying PDE, thereby lowering the dimensionality of the model.
Approaches to create lower dimensional models include reduced-order modelling (ROM \cite{salmoiraghi2016advances}), as well as
large eddy simulation (LES \cite{sagaut2006large}) and Reynolds-averaged Navier-Stokes (RANS \cite{alfonsi2009reynolds})
for fluid flow problems, specifically.
In such a truncated model, one or more closure terms appear, which represent the effects that are not directly resolved
by the reduced-order model.
For a recent overview of closure modelling for reduced-order models, see Ahmed \etal\cite{ahmed2021closures}.

While closure terms can in some cases be derived from theory (for example, for LES), this is generally not the case.
When they cannot be derived from theory, a recent approach is to use a machine learning model to learn the
closure term from data.
In this approach a specific type of machine learning model is used, called a neural closure model \cite{gupta2021neural}.
The overall idea is to approximate a PDE or large ODE system by a smaller ODE system, and to train a neural network to correct for the approximation error in the resulting ODE system.
Neural closure models are a special form of neural ODEs \cite{chen2018neural}, which have been the subject
of extensive research in the past years, for example by Finlay \etal\cite{finlay2020train} and Massaroli
\etal\cite{massaroli2020dissecting}.

A number of different approaches for training neural ODEs and neural closure models are available.
An important distinction is between approaches that compare predicted and actual time-derivatives of the
ODE (``derivative fitting''), and approaches that compare predicted and actual solutions (``trajectory fitting'').
Trajectory fitting itself can be done in two ways, depending on whether the optimisation problem for the neural network
is formulated as continuous in time and then discretised using an ODE solver (\optdisc), or formulated as discrete in
time (\discopt).

In several recent studies, neural closure models have been applied to fluid flow problems.
The considered approaches include
derivative fitting
\cite{guan2022learning,park2021toward,macart2021embedded,beck2018deep}, \discopt \cite{list2022learned}, and \optdisc
\cite{sirignano2020dpm,macart2021embedded}.
Derivative fitting is also used on a comparable but distinct problem by San and Maulik~\cite{san2018neural}.
There, Burgers' equation is solved using model order reduction (MOR) by means of proper orthogonal decomposition
(POD), resulting in an approximate ODE for which the closure term is approximated by a neural network.

Training neural ODEs efficiently and accurately has been the subject of some previous research. However, in the context of neural closure models, most of this earlier
work either does not consider certain relevant aspects or is not directly applicable.
For example, Onken and Ruthotto~\cite{onken2020discretize} compare \discopt and \optdisc for pure neural ODEs (i.e.~ODEs
in which the right-hand side only consists of a neural network term). They omit a derivative fitting approach since such
an approach is not applicable in the contexts considered there.
Ma \etal\cite{ma2021comparison} compare a wide variety of training approaches for neural ODEs, however with an emphasis on the
computational efficiency of different training approaches rather than on the accuracy of the resulting model.
Roesch \etal\cite{roesch2021collocation} compare trajectory fitting and derivative fitting approaches, considering pure
neural ODEs on two very small ODE systems.
As a result, the papers mentioned above are not fully conclusive to make general recommendations regarding how to train
neural closure models.

The purpose of this paper is to perform a systematic comparison of different approaches for constructing neural closure
models.
Compared to other works, the experiments performed here are not aimed at showing the efficacy of neural closure models
for a particular problem type, but rather at making general recommendations regarding different approaches for neural
closure models.
To this end, neural closure models are trained on data from two different discretised PDEs, in a variety of ways.
One of these PDEs, the Kuramoto-Sivashinsky equation, is chaotic and discretised into a stiff ODE system. This gives rise to additional challenges when
training neural closure models.
The results of this paper confirm that \discopt approaches are generally preferable to \optdisc approaches.
Furthermore, derivative fitting is found to be unpredictable, producing excellent models on one test set, but very poor
models on the other.
We give theoretical support to our results by reinterpreting two fundamental theorems from the fields of dynamical
systems and time integration in terms of neural closure models.

This paper is organised as follows: \Cref{sec:gradient-forms} describes a number of different approaches that are
available for training neural closure models.
\Cref{sec:theory} gives a number of theoretical results that can be used to predict the short-term and long-term
accuracy of models based on how they are trained and what error they achieve during training.
\Cref{sec:experiments} performs a number of numerical experiments in which the same neural closure model is trained in
multiple ways on the same two test equations, and the accuracy of the resulting models is compared.
Finally, \Cref{sec:conclusion} provides conclusions and recommendations.
The code used to perform the numerical experiments from \Cref{sec:experiments} is available online at
\url{https://github.com/HugoMelchers/neural-closure-models}.

\section{Preliminaries: approaches for neural ODEs}
\label{sec:gradient-forms}
In this paper, neural networks will be used as closure models for discretised PDEs.
Here, a time-evolution of the form \(\pd{u}{t} = F(u)\) is discretised into an ODE system \(\d{\vec{u}}{t} = f(\vec{u}),
\vec{u} \in \R^{N_x}\), such that taking progressively finer discretisations (resulting in larger values of \(N_x\))
produces more accurate solutions.
However, instead of taking a very fine discretisation, a relatively coarse discretisation will be used and a neural
network (NN) closure term will be added to correct for the spatial discretisation error.
This neural network depends not only on the vector \(\vec{u}\), but also on a vector \(\vartheta\) of trainable parameters:
\begin{align}
    \label{eq:generic-closure-model}
    \d{\vec{u}}{t} &= f(\vec{u}) + \text{NN}(\vec{u}; \vartheta).
\end{align}
Some of the theory regarding neural closure models also applies to neural ODEs, in which the neural network is the only
term in the right-hand side:
\begin{align}
    \label{eq:generic-neural-ode}
    \d{\vec{u}}{t} &= \text{NN}(\vec{u}; \vartheta).
\end{align}
In both cases, the result is a system of ODEs over a vector \(\vec{u}(t)\), in which the right-hand side depends not
only on \(\vec{u}(t)\) but also on some trainable parameters \(\vartheta\):
\begin{align}
    \label{eq:more-generic-neural-ode}
    \d{\vec{u}}{t} &= g(\vec{u}; \vartheta).
\end{align}
Note that in these models, the ODE is assumed to be autonomous, i.e., the right-hand side is assumed to be independent
of \(t\).
However, the work presented in this paper can be extended to non-autonomous ODEs, by extending the neural network to
depend on \(t\) or on some time-dependent control signal as well as on \(\vec{u}(t)\), and by including this additional data in the training data set.
The general form \eqref{eq:more-generic-neural-ode} covers more model types than just the neural ODEs and neural closure
models of equations \eqref{eq:generic-closure-model}~and~\eqref{eq:generic-neural-ode}.
Specifically, the output of the neural network does not have to be one of the terms in the right-hand side function, but
can also be included in other ways.
For example, Beck \etal\cite{beck2018deep} train neural networks to predict the eddy viscosity in an LES closure term,
rather than to predict the entire closure term, in order to ensure stability of the resulting model.

In this work, the specific form \eqref{eq:generic-closure-model} is used, with the exception that the output of the
neural network is passed through a simple linear function \(\fwddiff\), listed as a non-trainable layer in
\Cref{table:small-nn,table:large-nn}, which ensures that the solutions of the neural ODE satisfy conservation
of momentum.

Training a neural network corresponds to minimising a certain loss function, which must be chosen ahead of time.
Some loss functions are such that their gradients, which are used by the optimiser, can be computed in different ways.
In this section, an overview of different available approaches is given.

\begin{figure}
    \centering
    \includegraphics[width=0.48\textwidth]{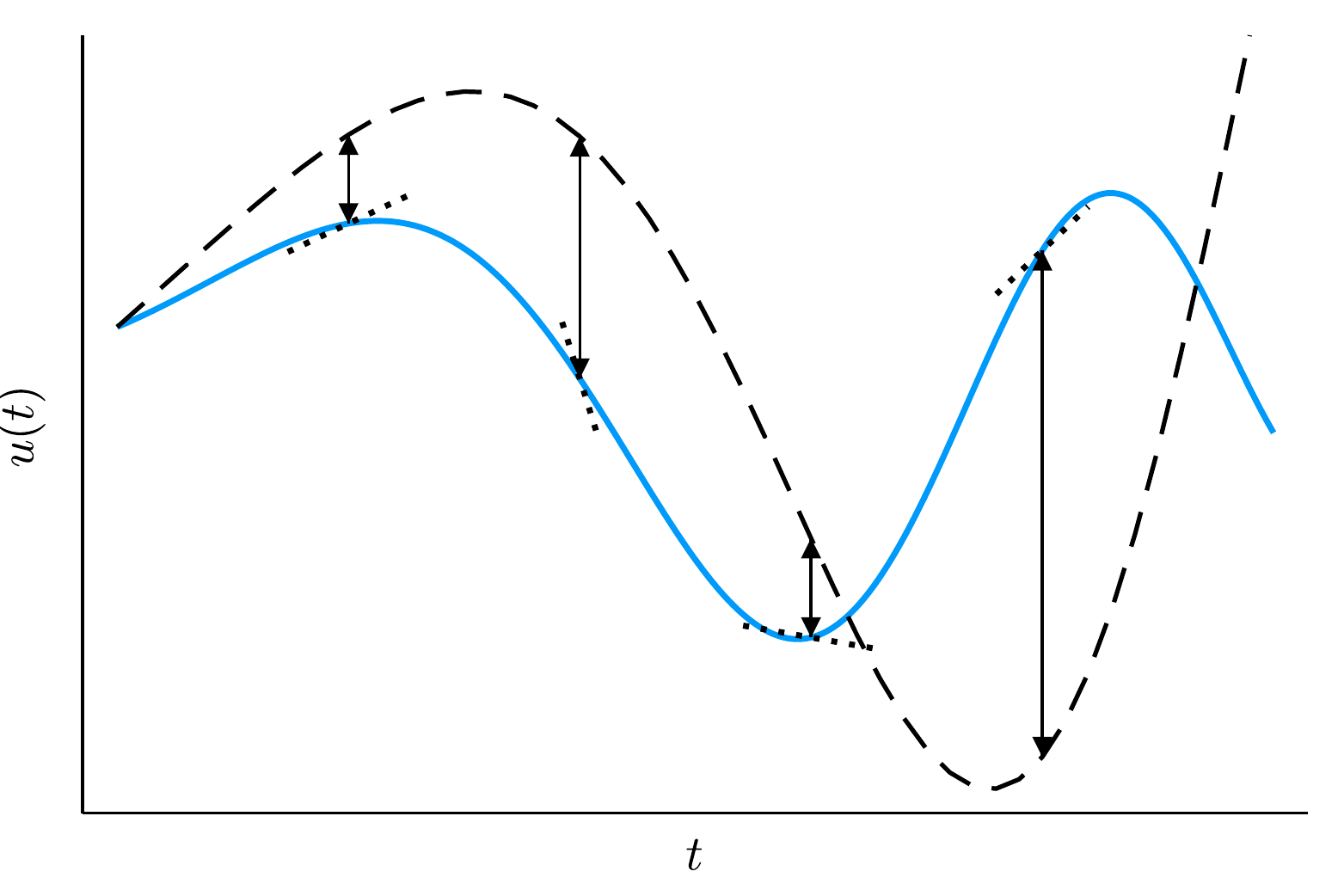}
    \caption{
        A visual comparison of the two types of neural ODE training: given a reference trajectory (solid), one can
        train the neural ODE to match the time-derivative of the trajectory (dotted lines), or to result in accurate ODE
        solutions (dashed line and arrows).
    }
    \label{fig:derivative-vs-trajectory-fitting}
\end{figure}

\subsection{Derivative fitting}
\label{sec:gradient-forms/derivfit}
With derivative fitting, also referred to as non-intrusive training \cite{pawar2019deep}, the loss function compares the
predicted and actual time-derivatives (i.e.~right-hand sides) of the neural ODE.
In this paper, the loss function used for derivative fitting will be a mean-square error (MSE):
\begin{align}
    \label{eq:derivative-fitting}
    \text{Loss}\left(\vartheta, \uref, \d{}{t}\uref\right)
    &= \frac{1}{N_xN_sN_p}\sum_{i = 1}^{N_s}\sum_{j = 1}^{N_p}\norm[2]{\d{\uref^{(j)}}{t}(t_i)
    - g\left(\uref^{(j)}(t_i); \vartheta\right)}^2.
\end{align}
Here, \(N_x\) is the size of the vector \(\uref\), \(N_s\) is the number of snapshots in each trajectory of the
training data, and \(N_p\) is the number of trajectories (i.e.~ODE solutions).
The value and time-derivative of the \nth[j] trajectory at time \(t_i\) are given by \(\uref^{(j)}(t_i)\) and
\(\d{\uref^{(j)}}{t}(t_i)\), respectively.

The main advantage of derivative fitting is that in order to compute the gradient of the loss function with respect to
\(\vartheta\), one only has to differentiate through the neural network itself.
This makes derivative fitting a relatively simple approach to use.
A disadvantage of derivative fitting is that the training data must consist of not just the values \(\vec{u}\), but also
their time derivatives \(\d{\vec{u}}{t}\).
This data is not always available, for example in cases where the trajectories \(\vec{u}(t)\) are obtained as
measurements from a physical experiment.
In this work, the training data is obtained through a high-resolution numerical algorithm.
Hence, the derivatives to be used for training are available.
In cases where exact derivatives are not available, they can be estimated from the available data for \(\vec{u}(t)\)
itself, as described by Roesch \etal\cite{roesch2021collocation}.
While they obtain good results with approximated derivatives, in general it is to be expected that substituting real
time-derivatives by approximations also decreases the accuracy of the neural network.

\subsection{Trajectory fitting: \Discopt}
\label{sec:gradient-forms/discopt}
An alternative to derivative fitting is trajectory fitting, also referred to as intrusive training \cite{pawar2019deep},
embedded training \cite{macart2021embedded}, or a solver-in-the-loop setup \cite{um2020solver}.
Here, the loss function compares the predicted and actual trajectories of the neural ODE.
Unless otherwise specified, trajectory fitting will also be done with the MSE loss function:
\begin{align}
    \label{eq:trajectory-fitting}
    \text{Loss}\left(\vartheta, \uref\right)
    &= \frac{1}{N_xN_tN_p}\sum_{i = 1}^{N_t}\sum_{j = 1}^{N_p}\norm[2]{\vec{u}^{(j)}(t_i) - \uref^{(j)}(t_i)}^2, \\
    \text{ where } \d{\vec{u}^{(j)}}{t}
    &= g\left(\vec{u}^{(j)}; \vartheta\right) \text{ and } \vec{u}^{(j)}(0) = \uref^{(j)}(0).
\end{align}
Trajectory fitting involves applying an ODE solver to the neural closure model to perform \(N_t\) time steps, where
\(N_t\) is a hyper-parameter that must be chosen ahead of time.
Computing the gradient of the loss function involves differentiating through the ODE solving process and can be done in
two separate ways.
One way to do this is by directly differentiating through the computations of the ODE solver.
This approach is called \textbf{\discopt}.

In the \discopt approach, the neural ODE is embedded in an ODE solver, for example an explicit Runge-Kutta method.
In such an ODE solver, the next solution snapshot \(\vec{u}(t + \Delta t)\) is computed from \(\vec{u}(t)\) by
performing one step of the ODE solver, which generally involves applying the internal neural network multiple times
(depending on the number of stages of the ODE solver).
This is repeated to obtain a predicted trajectory over a time interval of length \(T = N_t\Delta t\).
Since all the computations done by an ODE solver are differentiable, one can simply compute the required gradient by
differentiating through all the time steps performed by the ODE solver.
The \discopt approach effectively transforms a neural ODE into a discrete model, in which the time series is predicted
by advancing the solution by a fixed time step \(\Delta t\) at a time.
As such, any training approach that can be applied to discrete models of the form \(\vec{u}(t + \Delta t) = \text{model}
(\vec{u}(t))\) can also be applied to neural ODEs trained using this approach.

\subsection{Trajectory fitting: \Optdisc}
\label{sec:gradient-forms/optdisc}
Differentiating through the computations of the ODE solver is not always a possibility, for example if the ODE
solver is implemented as black-box software.
In such cases, trajectory fitting with the loss function \eqref{eq:trajectory-fitting} can still be used by computing gradients with the \textbf{\optdisc} approach.
In this approach, the required gradients are computed either by extending the ODE with more variables that store
derivative information, or by solving a second ``adjoint'' ODE backwards in time after the original ``forward'' ODE
solution is computed.
These two methods can be considered continuous-time analogues to forward-mode and reverse-mode automatic differentiation
(AD), respectively.

The adjoint ODE approach was popularised by Chen \etal\cite{chen2018neural}, who demonstrate that on some problems the
adjoint ODE approach can be used to train a neural ODE with much lower memory usage than other approaches.
Ma~\etal\cite{ma2021comparison} find that the adjoint ODE approach is computationally more efficient than the forward
mode approach for ODEs with more than 100 variables and parameters.
As such, a description of the forward mode approach is omitted here.
The adjoint ODE approach is the \optdisc approach that will be tested here.
This approach can be implemented in three different ways.
The implementation used in this work is the \textit{interpolating adjoint method}, in which the gradient of the loss
function is computed by first solving the forward ODE \eqref{eq:more-generic-neural-ode} to obtain the trajectory
\(\vec{u}(t)\), and then solving the adjoint ODE system
\begin{subequations}
    \label{eq:backsolve-ode}
    \begin{align}
        \d{}{t}\vec{y}\transpose &= -\vec{y}\transpose\pd{}{\vec{u}}g(\vec{u}; \vartheta), \quad \vec{y}(T) = \vec{0}, \label{eq:backsolve-ode-y} \\
        \d{}{t}\vec{z}\transpose &= -\vec{y}\transpose\pd{}{\vartheta}g(\vec{u}; \vartheta), \quad \vec{z}(T) = \vec{0}, \label{eq:backsolve-ode-z}
    \end{align}
\end{subequations}
from \(t = T\) backwards in time until \(t = 0\), performing discrete updates to \(\vec{y}(t)\) at times \(t_i, i = N_t,
N_t - 1, \dots, 2, 1\).
After the adjoint ODE system is solved, the gradient \(\d{\text{Loss}}{\vartheta}\) is given by \(\vec{z}(0)\).
For implementation details and an overview of other \optdisc methods, see Chapter 3 of Melchers~\cite{mastersthesis}.

Note that the two trajectory fitting approaches, i.e.~\discopt and \optdisc, both require choosing \(N_t\), the number
of time steps that the solution prediction is computed for.
As will be described in \Cref{sec:theory}, choosing \(N_t\) either too small or too large may have negative consequences
for the accuracy of the trained model.
For the \optdisc approach, the gradients used by the optimiser are computed as the solution of an ODE over a time span
of \(T = N_t \Delta t\).
Since the numerically computed ODE solution is inexact, choosing a larger value of \(N_t\) will generally result in less
accurate gradients, which may also decrease the accuracy of the trained model.

\subsection{Algorithm comparison}
\label{sec:gradient-forms/comparison}
\begin{table}
    \caption{
        An overview of the differences between the three training approaches outlined in \Cref{sec:gradient-forms}.
    }
    \label{table:gradient-form-comparison}
    \begin{tabular}{@{}p{6.5cm} c c c@{}}
        \toprule
        & & \multicolumn{2}{c}{Trajectory fitting}\\
        \cmidrule{3-4}
        & Derivative fitting & \multirow{2}{3cm}{\Discopt} & \multirow{2}{3cm}{\Optdisc} \\
        &  & &  \\
        \midrule
        Differentiability required & \(\text{NN}\) & \(\text{NN}\), \(f\), ODE solver & \(\text{NN}\), \(f\) \\
        Accuracy of loss function gradients & Exact & Exact & Approximate \\
        Learns long-term accuracy & No & Yes & Yes \\
        Requires time-derivatives of training data & Yes & No & No \\
        Computational cost & Low & High & High\\
        \bottomrule
    \end{tabular}
\end{table}
An overview of the advantages and disadvantages of different approaches is given in
\Cref{table:gradient-form-comparison}.
Here, the term `long-term' refers to the accuracy of predictions when solving the ODE over multiple time steps
as opposed to only considering the instantaneous error in the time-derivative of the solution.
Note that the computational cost will not be compared in this work; the goal is to compare the accuracy of the
resulting models.
Performance measurements of different training procedures will not be given here since the code used to perform the
numerical experiments in this work was not written with computational efficiency in mind, and since training was not
performed on recent hardware.
However, derivative fitting is expected to be computationally more efficient due to the fact that it does not require
differentiating through the ODE solution process.
A performance comparison between different implementations for \optdisc and \discopt approaches is given by
Ma~\etal\cite{ma2021comparison}.
The performance difference between derivative fitting and trajectory fitting will be taken into account when making
recommendations in \Cref{sec:conclusion}.

As for accuracy, the \discopt approach is expected to yield more accurate gradients than \optdisc, due to the absence of
temporal discretisation errors in the gradient computation.
The accuracy of derivative fitting is not easily compared to that of the other methods.
Like \optdisc, it suffers from the fact that the training does not take the temporal discretisation error of the ODE
solver into account.

Onken and Ruthotto~\cite{onken2020discretize} compare \discopt and \optdisc approaches for two problems, including a time
series regression similar to the trajectory fitting problem described earlier in this section.
Their findings indicate that training with \discopt results in computationally more efficient training (less time
required per epoch), as well as faster convergence (fewer epochs required to reach a given level of accuracy).
However, the trajectory regression test performed there only considers a small and relatively simple ODE system over
just two variables.
Furthermore, the use of neural networks as closure terms introduces some additional challenges that need to be overcome
in some cases, including solving the ODE \eqref{eq:generic-neural-ode} efficiently for stiff ODEs, and choosing the
value of \(N_t\) for chaotic systems.

\section{Theory concerning training approaches}
\label{sec:theory}

As described in the previous section, different training approaches are available for neural ODEs and neural closure
models.
The training approach used will generally have an effect on both the short-term accuracy and the long-term accuracy of
the trained model.
This is supported by the following two theorems, which use the short-term error of a model to provide upper bounds on
the long-term error.
Here, `short-term' refers to the predictions and prediction errors in the time-derivatives (for derivative fitting), or
after a single time step (for trajectory fitting).
The term `long-term' refers to the predictions after multiple time steps, i.e.~when predicting over a time interval of
\(T > \Delta t\).
Although neither of these theorems are new, they are interpreted in a novel way that gives insight regarding the
long-term accuracy of models based on their accuracy during training.

\subsection{Derivative fitting}
\label{sec:theory/derivfit}
For models trained using derivative fitting, a relation between the error of the right-hand sides of the ODE and the
error of the ODE solutions is given by Theorem~10.2 of Hairer \etal\cite{hairer1993solving}.
This theorem is referred to there as the `fundamental lemma' and is repeated here in a simplified form:
\begin{theorem}[Hairer \etal\cite{hairer1993solving}]
    \label{thm:error-bound-continuous}
    Let \(\uref(t) \in \R^{N_x}, t \geq 0\) be given, let \(\vec{u}(t) \in \R^{N_x}, t \geq 0\) be the solution of
    the ODE \(\d{\vec{u}}{t} = g(\vec{u}; \vartheta)\) with initial condition \(\vec{u}(0) = \uref(0)\), and let
    \(\norm{\cdot}\) be a norm on \(\R^{N_x}\).
    If the following holds:
    \begin{align*}
        a)\quad & \norm{\d{}{t}\uref(t) - g(\uref(t); \vartheta)} \leq \varepsilon, \\
        b)\quad & \norm{g(\vec{a}; \vartheta) - g(\vec{b}; \vartheta)} \leq C \norm{\vec{a} - \vec{b}},
    \end{align*}
    for fixed Lipschitz constant \(C > 0\) and fixed \(\varepsilon > 0\).
    Then the following error bound holds:
    \begin{align*}
        \norm{\uref(t) - \vec{u}(t)}
        &\leq \frac{\varepsilon}{C}\left( e^{Ct} - 1 \right).
    \end{align*}
\end{theorem}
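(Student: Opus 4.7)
The plan is to reduce the theorem to a scalar Gronwall-type inequality for the error norm, and then bound that norm using an integrating-factor trick.

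First, I would define the error $e(t) \bydef \uref(t) - \vec{u}(t)$, which satisfies $e(0) = \vec{0}$ by construction, and then write
\begin{align*}
    \d{e}{t}(t)
    &= \d{\uref}{t}(t) - g(\vec{u}(t); \vartheta) \\
    &= \underlabel{\text{residual}}{\left(\d{\uref}{t}(t) - g(\uref(t); \vartheta)\right)}
     + \underlabel{\text{Lipschitz term}}{\bigl(g(\uref(t); \vartheta) - g(\vec{u}(t); \vartheta)\bigr)}.
\end{align*}
Taking norms and applying assumptions (a) and (b) yields $\norm{\d{e}{t}(t)} \leq \varepsilon + C\norm{e(t)}$. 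This reduces the vector problem to a one-dimensional differential inequality for the scalar quantity $\varphi(t) \bydef \norm{e(t)}$.

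Next, I would cast the inequality in integral form to sidestep the fact that $\norm{e(t)}$ need not be differentiable (norms are only Lipschitz, not $C^1$, and the error could pass through zero). Since $e(t) = \int_0^t \d{e}{s}(s)\,\dd s$, the triangle inequality gives
\begin{align*}
    \varphi(t) \leq \int_0^t \norm{\d{e}{s}(s)}\,\dd s
    \leq \varepsilon t + C\int_0^t \varphi(s)\,\dd s.
\end{align*}
This is the standard setting for Gronwall's inequality.

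Finally, I would apply the integrating-factor argument. Setting $\Phi(t) \bydef \int_0^t \varphi(s)\,\dd s$, the previous bound reads $\Phi'(t) \leq \varepsilon t + C\Phi(t)$. Multiplying by $e^{-Ct}$ gives $(e^{-Ct}\Phi(t))' \leq \varepsilon t\, e^{-Ct}$, and integrating from $0$ to $t$ (using $\Phi(0) = 0$) produces $\Phi(t) \leq \tfrac{\varepsilon}{C^2}(e^{Ct} - 1 - Ct)$. Substituting back into $\varphi(t) \leq \varepsilon t + C\Phi(t)$ yields the desired bound $\varphi(t) \leq \tfrac{\varepsilon}{C}(e^{Ct} - 1)$.

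The main obstacle is not the algebra but the regularity issue flagged in the second step: the naive approach of writing $\d{}{t}\norm{e(t)} \leq \norm{\d{e}{t}(t)}$ requires justification (e.g.\ via Dini derivatives or Danskin's lemma) whenever $e(t) = \vec{0}$. Passing to the integral formulation avoids this technicality entirely, which is why I would adopt it; everything downstream is then a clean scalar Gronwall computation.
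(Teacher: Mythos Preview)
The paper does not actually supply a proof of this theorem: it is quoted (in simplified form) from Hairer \etal\ and left unproved, so there is no in-paper argument to compare against. Your proposal is nonetheless a correct and complete proof, following the standard Gronwall route that Hairer \etal\ themselves use for the ``fundamental lemma'': split the derivative of the error into a residual bounded by \(\varepsilon\) and a Lipschitz term bounded by \(C\norm{e(t)}\), pass to an integral inequality, and close with the integrating-factor version of Gronwall. The algebra in your Step~3 checks out, and your decision to work with the integral form rather than differentiating the norm is the right way to avoid the regularity issue you flag.
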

\Cref{thm:error-bound-continuous} can be interpreted as follows: suppose that a machine learning model is trained to
predict the time-derivatives of trajectories \(\uref(t)\).
The result is a function \(g(\cdot; \vartheta)\) that computes the right-hand side of an ODE.
Then, achieving an error \(\leq \varepsilon\) in the short term does not guarantee a low error in the long term.
The error at time \(t\) \textit{can} grow exponentially in \(t\), with the exponential growth factor being dependent on
the Lipschitz constant of the trained model.

A possible mitigation for this problem is to add a term to the loss function that penalises large Lipschitz constants in
the neural network.
However, in the case of neural closure models~\eqref{eq:generic-closure-model}, penalising the Lipschitz constant of the
neural network only is not expected to help much.
\Cref{thm:error-bound-continuous} concerns the Lipschitz constant of the total right-hand side \(g(\cdot; \vartheta)\),
which equals \(f(\vec{u}) + \text{NN}(\vec{u}; \vartheta)\) for neural closure models.
In such cases, the Lipschitz constant of \(g(\cdot; \vartheta)\) cannot be kept small by bounding the Lipschitz constant
of the neural network term only.
Furthermore, for closure models for stiff ODEs the function \(f\) itself has a large Lipschitz constant by definition,
meaning that the Lipschitz constant of the total right-hand side will inevitably be large as well.
For many problems the function \(f\) is not Lipschitz continuous at all, for example because it contains a quadratic
term.
In such cases, the above theorem does not provide an upper bound for the error.
Additionally, even if reducing the Lipschitz constant of the entire model is an option, this may come at the
expense of lower short-term accuracy (i.e.~a larger value for \(\varepsilon\)).

An example where the potential exponential error increase seems to occur is encountered by Beck
\etal\cite{beck2018deep}: they train neural networks to predict the closure term in three-dimensional fluid flows,
and find that neural networks that predict this closure term with high accuracy can nonetheless result in inaccurate
predicted trajectories.
Similarly, Park and Choi~\cite{park2021toward} find that neural closure models with high accuracy on derivatives do not
always produce accurate trajectories.
MacArt \etal\cite{macart2021embedded} encounter the same issue and instead train using \optdisc to obtain models that
produce accurate solutions.
It should be noted, however, that derivative fitting does not always lead to poor models.
For example, Guan \etal\cite{guan2022learning} do not encounter this problem when training neural networks for LES
closure terms.

\subsection{Trajectory fitting}
\label{sec:theory/trajfit}
A theorem similar to \Cref{thm:error-bound-continuous} exists for models trained using trajectory fitting, either by
\discopt or by \optdisc.
This theorem is likely not new, but an independently derived proof is given here.
The theorem applies to all models that are used to advance a solution \(\vec{u}(t)\) by a fixed time step \(\Delta t\).
As such, the theorem is not written specifically for neural closure models.
\begin{theorem}
    \label{thm:error-bound-discrete}
    Let \(\uref(t_i), i = 0, 1, \dots, \dots\) be a sequence of vectors in \(\R^{N_x}\) where \(t_i = i\Delta t\), let
    \(\norm{\cdot}\) be a norm on \(\R^{N_x}\), and let \(G(\cdot; \vartheta): \R^{N_x} \to \R^{N_x}\) be a function
    such that:
    \begin{align*}
        a)\quad & \norm{\uref(t_{i+1}) - G(\uref(t_i); \vartheta)}
            \leq \varepsilon \text{ for all } i = 1, 2, \dots, N_t, \\
        b)\quad & \norm{G(\vec{a}; \vartheta) - G(\vec{b}; \vartheta)}
            \leq C \norm{\vec{a} - \vec{b}} \text{ for all } \vec{a}, \vec{b} \in \R^{N_x},
    \end{align*}
    for fixed Lipschitz constant \(C > 0, C \neq 1\) and fixed \(\varepsilon > 0\).
    Define the sequence \(\vec{u}(t_{i+1}) = G(\vec{u}(t_i); \vartheta)\) with \(\vec{u}(0) = \uref(0)\).
    Then the following error bound holds:
    \begin{align*}
        \norm{\vec{u}(t_k) - \uref(t_k)} &\leq \varepsilon \frac{C^k - 1}{C - 1} \text{ for } k = 0, 1, 2, \dots.
    \end{align*}
\end{theorem}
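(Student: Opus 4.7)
The plan is to proceed by induction on $k$, with the induction variable being the number of time steps taken. I set $e_k := \norm{\vec{u}(t_k) - \uref(t_k)}$ and aim to show $e_k \leq \varepsilon \frac{C^k - 1}{C - 1}$ directly. The base case $k = 0$ is immediate because the initial condition is matched exactly by hypothesis, giving $e_0 = 0$, which agrees with the bound since $\frac{C^0 - 1}{C - 1} = 0$.

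For the induction step, assume the bound holds for some $k$. To estimate $e_{k+1}$, I would insert the intermediate vector $G(\uref(t_k); \vartheta)$ and apply the triangle inequality:
\begin{align*}
    e_{k+1} &= \norm{G(\vec{u}(t_k); \vartheta) - \uref(t_{k+1})} \\
            &\leq \norm{G(\vec{u}(t_k); \vartheta) - G(\uref(t_k); \vartheta)} + \norm{G(\uref(t_k); \vartheta) - \uref(t_{k+1})}.
\end{align*}
The first term is bounded by $C e_k$ using hypothesis $b)$ (Lipschitz continuity of $G$), while the second term is bounded by $\varepsilon$ using hypothesis $a)$ (short-term prediction error). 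This yields the linear recurrence $e_{k+1} \leq C e_k + \varepsilon$ with $e_0 = 0$.

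The remaining step is to solve this recurrence. Substituting the induction hypothesis gives $e_{k+1} \leq C \cdot \varepsilon \frac{C^k - 1}{C - 1} + \varepsilon = \varepsilon \frac{C^{k+1} - C + C - 1}{C - 1} = \varepsilon \frac{C^{k+1} - 1}{C - 1}$, closing the induction. Equivalently, one can recognise that unrolling the recurrence yields the geometric sum $e_k \leq \varepsilon \sum_{j=0}^{k-1} C^j$, which has the stated closed form precisely because $C \neq 1$ (the case $C = 1$ would give the linear bound $e_k \leq k\varepsilon$, which is why it is excluded from the statement).

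There is no real obstacle here: the result is a discrete analogue of the continuous Grönwall-type bound in \Cref{thm:error-bound-continuous}, and the argument is essentially a one-line telescoping via the triangle inequality followed by a standard geometric series. The only point requiring mild care is making sure to split the error at step $k+1$ through the correct intermediate point $G(\uref(t_k); \vartheta)$ so that each of the two hypotheses applies cleanly to exactly one of the resulting terms.
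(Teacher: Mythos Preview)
Your proof is correct and follows essentially the same route as the paper: both establish the recurrence \(e_{k+1} \leq C e_k + \varepsilon\) via the same triangle-inequality split through \(G(\uref(t_k);\vartheta)\). The only cosmetic difference is that the paper resolves this recurrence by the shift \(r_k = e_k + \varepsilon/(C-1)\), which reduces it to \(r_k \leq C r_{k-1}\), rather than by your direct induction / geometric-series argument.
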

\begin{proof}
    For arbitrary \(k \geq 1\), the following holds:
    \begin{align*}
        \norm{\vec{u}(t_k) - \uref(t_k)}
        &= \norm{G(\vec{u}(t_{k-1}); \vartheta) - \uref(t_k)} \\
        &\leq \norm{G(\vec{u}(t_{k-1}); \vartheta) - G(\uref(t_{k-1}); \vartheta)}
            + \norm{G(\uref(t_{k-1}); \vartheta) - \uref(t_k)} \\
        &\leq C \norm{\vec{u}(t_{k-1}) - \uref(t_{k-1})} + \varepsilon.
    \end{align*}
    Now, define \(r_k = \norm{\vec{u}(t_k) - \uref(t_k)} + \frac{\varepsilon}{C - 1}\).
    Then \(r_0 = \frac{\varepsilon}{C - 1}\) and for \(k \geq 1\):
    \begin{align*}
        r_{k}
        &= \norm{\vec{u}(t_k) - \uref(t_k)} + \frac{\varepsilon}{C - 1} \\
        &\leq C\norm{\vec{u}(t_{k - 1}) - \uref(t_{k - 1})} + \varepsilon + \frac{\varepsilon}{C - 1} \\
        &= C \left( r_{k-1} - \frac{\varepsilon}{C - 1} \right) + \frac{C\varepsilon}{C - 1} \\
        &= C r_{k-1}.
    \end{align*}
    As a result, \(r_k \leq C^k r_0 = \frac{C^k \varepsilon}{C - 1}\), so:
    \begin{align*}
        \norm{\vec{u}(t_k) - \uref(t_k)}
        &= r_k - \frac{\varepsilon}{C - 1}
        \leq \varepsilon\frac{C^k - 1}{C - 1}.
    \end{align*}
\end{proof}
The proof does not work in the case that \(C = 1\) due to the use of the term \(C - 1\) as a numerator, but using
similar reasoning it can be shown that \(\norm{\vec{u}(t_k) - \uref(t_k)} \leq \varepsilon k\) holds when \(C = 1\).
Also note that if \(C < 1\) then \(G(\cdot; \vartheta)\) is a contraction and the sequence \(\vec{u}(t_k)\) will
converge to a fixed point.
This also results in a bounded error \(r_k\), since properties \(a)\) and \(b)\) combined imply that the sequence
\(\uref(t_k)\) is bounded as well.
However, in this case the model may still be qualitatively poor, since the actual and predicted sequences may converge
to different fixed points and may have different transient dynamics.
As such, even in the case that \(C < 1\), a low error after one time step does not imply that the behaviour over
multiple time steps is accurate.

\Cref{thm:error-bound-discrete} concerns the case that a model is trained by predicting a single time step, i.e.~with
\(N_t = 1\).
However, it can be generalised to models trained with \(N_t > 1\).
For example, a model trained with \(N_t = 2\) that achieves an error \(\leq \varepsilon_2\) after two time steps can be
seen as a single model that performs two applications of the inner model, meaning that \Cref{thm:error-bound-discrete}
can be applied to the model \(\vec{u} \mapsto G(G(\vec{u}; \vartheta); \vartheta)\), which has some Lipschitz constant
\(C_2\).
Then, the error at time step \(k\) is bounded by \(\varepsilon_2 (C_2^{k/2} - 1) / (C_2 - 1)\), since the approximation
at \(t = t_k\) only requires \(k/2\) applications of the model.

\Cref{thm:error-bound-discrete} implies that when training models by trajectory fitting, training with a small number of predicted time
steps \(N_t\) may result in models that produce poor predictions in the long term.
While training with larger values of \(N_t\) still results in an exponential upper bound for the error, it is expected that a model will yield more accurate solutions in the long term if long-term errors are penalised during training.

However, if the underlying ODE or PDE is chaotic, \(N_t\) must not be too large, either.
Initially similar solutions to a chaotic system diverge from each other as \(\exp(\lambda_{\max}t)\) where
\(\lambda_{\max}\) is the Lyapunov exponent of the system, which will be further explained in \Cref{sec:experiments/ks}.
Then, the sensitivity of the ODE solution after time \(t\) with respect to \(\vartheta\) will also grow as \(\exp(\lambda_{\max}t)\).
This means that when the loss function is a simple mean-square error between the predicted and actual trajectories, the
gradient of this loss function with respect to the model parameters is mostly determined by the solution of the neural
ODE for large \(t\).
The result is that the optimisation procedure works to decrease the long-term error instead of first decreasing the
short-term error.
For non-closure models (i.e.~pure neural ODEs of the form \(g(\vec{u}; \vartheta) = \text{NN}(\vec{u}; \vartheta)\)),
this may result in poor models (see Section~5.6.2 of Melchers~\cite{mastersthesis}).
This issue does not appear to be as severe for neural closure models.
Nonetheless it can be helpful to compensate for the exponentially increasing sensitivity by exponentially
weighing the loss function:
\begin{subequations}
    \begin{align}
        \label{eq:exp-weighted-mse}
        \text{Loss}_c\left(\vartheta, \uref\right)
        &= \frac{1}{N_xN_pZ}\sum_{i = 1}^{N_t}\sum_{j = 1}^{N_p}\exp(-2c\lambda_{\max}t_i)
            \cdot\norm[2]{\vec{u}^{(j)}(t_i) - \uref^{(j)}(t_i)}^2, \\
        \text{where } Z &= \sum_{i = 1}^{N_t} \exp(-2c\lambda_{\max}t_i).
    \end{align}
\end{subequations}
This loss function generalises the mean square error \eqref{eq:trajectory-fitting} by allowing the prediction error at
time \(t_i\) to be weighted by a factor \(\exp(-2c\lambda_{\max}t_i)\).
Here, the constant \(c\) can be chosen arbitrarily.
Taking \(c = 0\) recovers the standard mean-square error (MSE).
Note that the sum is over squared errors, which grow as \(\exp(2\lambda_{\max}t)\), meaning that the reasonable choice
according to the above reasoning is \(c = 1\).
Weighted loss functions with a number of choices for \(c\) will be evaluated in \Cref{sec:experiments/ks/optdisc}, as
well as training on shorter trajectories.

Similar to the continuous case (\Cref{thm:error-bound-continuous}), the exponential increase in error can be mitigated by
training the model in a way that penalises large Lipschitz constants.
Again, for neural closure models it is not sufficient to limit the Lipschitz constant of just the neural network.
Another mitigation approach is to train on a larger number of time steps, i.e.~to increase the value of \(N_t\) in the
loss function \eqref{eq:trajectory-fitting}.
Note that the fact that models trained to predict time series perform poorly when trained on single steps is already
known, and methods to mitigate this problem have already been studied.
In research such as that done by List \etal\cite{list2022learned}, unrolling multiple time steps is found to be crucial
in obtaining models that make accurate predictions.
Similarly, Pan and Duraisamy~\cite{pan2018long} find that in the discrete case, the long-term accuracy of the model
can be improved by adding a regularisation term to the loss function based on the Frobenius norm of the Jacobian of the
neural network.
Another way to obtain more accurate models is to add noise to the neural network's inputs (the vectors \(\vec{u}
_{\text{ref}}(\cdot)\)) during training, which makes the model less sensitive to small perturbations in its input
(see for example Chapter~7.4 of Goodfellow \etal\cite{Goodfellow-et-al-2016}).
Since the input to the model is equal to the output of the previous step, this regularisation technique is therefore
expected to decrease the rate at which the approximation error increases per step, meaning that it has a similar effect to
reducing the Lipschitz constant \(C\).

The exponentially increasing sensitivity with respect to the parameters, combined with \Cref{thm:error-bound-discrete}
shows that choosing the number of time steps \(N_t\) for trajectory fitting is not trivial as both too small and too
large values of \(N_t\) may result in poor models.

\section{Numerical experiments}
\label{sec:experiments}

In this section, neural networks will be trained in different ways to predict solutions of discretised PDEs of the form
\begin{align}
    \label{eq:generic-1D-pde}
    \pd{u}{t}(x, t) = F(u)(x, t).
\end{align}
This is a scalar PDE on a one-dimensional spatial domain.
The boundary conditions are periodic, i.e.~\(u(x, t) = u(x + L, t)\) for some domain length \(L\), so that the PDE is
translation-invariant.
Two PDEs of the form~\eqref{eq:generic-1D-pde} are used: \textbf{Burgers' equation} and the \textbf{\ks equation}.
These equations are described in more detail in \ref{sec:data-generation/burgers}
and~\ref{sec:data-generation/kuramoto-sivashinsky}, respectively.

In order to generate training data from these equations, they are discretised using the finite volume method with a
large number of finite volumes.
The resulting ODEs are solved, and the solutions are down-sampled by averaging the resulting vectors \(\vec{u} (t)\) to
obtain a coarser discretisation of the original PDE.
More details regarding the data generation are given in \ref{sec:data-generation}.

\subsection{Burgers' equation}
\label{sec:numerical-experiments/burgers}
The first test equation is Burgers' equation:
\begin{align*}
    \pd{u}{t} &= -\frac{1}{2}\pd{}{x}\left(u^2\right) + \nu \pd[2]{u}{x},
\end{align*}
with \(\nu = 0.0005\), for \(x \in [0, 1]\) with periodic boundary conditions.
Solutions of this PDE contain waves that travel either left or right through the domain (depending on the sign of
\(u\)), resulting in shock waves that are then dissipated.
Training data is generated by discretising the PDE into 4096 finite volumes, solving the resulting ODE for \(t \in [0,
0.5]\) using the standard fourth-order accurate Runge-Kutta method, and down-sampling the solutions to \(N_x = 64\) finite volumes.
The resulting training data consists of \(N_p = 96\) trajectories used for training and 32 for testing.
Each trajectory consists of an initial condition followed by 64 additional snapshots with a time interval of
\(\Delta t = 2^{-7}\) between snapshots, meaning the total number of snapshots per trajectory is \(N_s = 65\).
More information about the data generation is given in \ref{sec:data-generation/burgers}.

Three ML models are trained on data obtained from Burgers' equation.
These are closure models of the form \(\d{\vec{u}}{t} = f(\vec{u}) + \text{NN}(\vec{u}; \vartheta)\).
For all three models, the neural network is a Convolutional Neural Network (CNN) with two convolutional layers; more
details about the neural network are given in \ref{sec:training-details/nn-architectures}.
The convolutional structure is chosen to ensure that the models satisfy the translational invariance present in Burgers' equation.
The models are trained in the three ways outlined in \Cref{sec:gradient-forms}.

\subsubsection{Derivative fitting}
One neural closure model is trained using derivative fitting, meaning the loss function is as given in
\eqref{eq:derivative-fitting}.
The training data consists of \(N_pN_s = 6240\) input-output pairs \(\left(\uref, \left(\d{\vec{u}}
{t}\right)_{\textnormal{ref}}\right)\) on which the model is trained.
This model is trained for 10000 epochs with a batch size of 64.
No regularisation term is added to the loss function as this not is found to meaningfully improve the resulting model.
The choice not to add a regularisation term is motivated in more detail in
\Cref{sec:numerical-experiments/burgers/results}.

\subsubsection{\Discopt}
For the \discopt approach, the neural closure model is embedded in \texttt{Tsit5}, a fourth-order ODE solver due to
Tsitouras \cite{tsitouras2011runge}, with fixed time step.
The loss function is given by equation \eqref{eq:trajectory-fitting} where \(N_p = 96\), \(N_t = 64\), and \(t_i = i\Delta t\)
with \(\Delta t = 2^{-7}\).
In words, the loss function is the mean-square error of the trajectory prediction, averaged over all snapshots of the
training data except the initial condition.
This model is trained for 20000 epochs, since trajectory fitting is found to converge more slowly than derivative fitting.
The training is done with a smaller batch size of 8, since an input-output-pair that can be used for training is now one
of the 96 trajectories, instead of one of the 6240 snapshots.

\subsubsection{\Optdisc}
A third model is trained using \optdisc.
The ODE solver, loss function, batch size, and number of epochs are the same as those for the \discopt model.
The main difference is that for the \optdisc model, the time step of the ODE solver is not fixed, but is determined by
the solver's internal error control mechanism.
To compute the gradients required for training, the adjoint ODE is solved using the same ODE solver as the forward ODE.

\subsubsection{Results}
\label{sec:numerical-experiments/burgers/results}
After training, all models are evaluated on test data that is not used during training.
For each of the evaluated models, \(32\) initial conditions \(\vec{u}^{(j)}_{\text{ref}}(0)\) for \(j = 1, \dots, 32\)
are given to the ML model.
The model then uses these initial conditions to make predictions for the trajectories \(\vec{u}^{(j)}_{\text{predict}}
(t_i)\) for \(j = 1, \dots, 32\) and \(i = 1, \dots, N_t\).
These are then compared to the actual trajectories \(\vec{u}^{(j)}_{\text{ref}}(t_i)\) by taking the root-mean-square
error (RMSE), summing over all components of the vector \(\vec{u}\), all time points \(t_i\), and all 32 testing
trajectories.
Therefore, the RMSE is simply the square root of the expression in \eqref{eq:trajectory-fitting}, except over a
different set of trajectories.
In addition to the neural closure models, the RMSE is also computed when solving the coarse discretisation of Burgers'
equation without a closure term, i.e.~with \(\text{NN}(\vec{u}; \vartheta) \equiv 0\).
The RMSEs on the test data for all models are shown in \Cref{table:burgers-results}.
\begin{table}
    \centering
    \caption{
        The RMSE for each of the tested models on the 32 testing trajectories of the Burgers' equation.
    }
    \label{table:burgers-results}
    \vspace{1em}
    \begin{tabular}{l r}
        \toprule
        Training approach & Error on test data \\
        \midrule
        Coarse ODE without closure term & 0.104\phantom{0} \\
        \midrule
        Derivative fitting              & 2.67\phantom{00} \\
        \Discopt                        & 0.0264 \\
        \Optdisc                        & 0.0312 \\
        \bottomrule
    \end{tabular}
\end{table}
While the two models trained on trajectory fitting achieve significantly lower error than the coarse ODE solved without
a closure term, the model trained using derivative fitting produces highly inaccurate results.
As is visible in \Cref{fig:burgers-derivfit-model-error}, the model trained on derivative fitting produces a trajectory
prediction that has completely different behaviour to the training data.
The error in the prediction grows steadily as \(t\) increases, and eventually reaches a steady-state with high error.
This indicates that the derivative-fitting trained model creates a prediction that converges to a significantly
different steady state to the steady state of the reference solution.
To investigate this result further, more models are trained using derivative fitting with L2-regularisation, and indeed adding
a regularisation term to the loss function does improve the accuracy of the resulting models.
Nevertheless, these results are not included here.
The reason for this is that adding a regularisation term only helps by reducing the magnitude of the neural network
parameters, thereby also reducing the magnitude of the neural network output.
The result is that training with a strong regularisation brings the accuracy of the neural closure model closer to that
of the coarse ODE, without surpassing the coarse ODE in accuracy.
Therefore, even with a regularisation term, derivative fitting is not effective at producing accurate neural closure
models.
Note that the prediction made by the model trained by \discopt, which is shown in \Cref{fig:burgers-discopt-model-error},
has lower error for large \(t\) than for small \(t\).
The reason for this is that solutions to Burgers' equation show
relatively complex transient behaviour before converging to a simple steady state, making the long-term behaviour easier
to predict than the short-term behaviour.

\begin{figure}
    \centering
    \begin{subfigure}[b]{0.48\textwidth}
        \includegraphics[width=\textwidth]{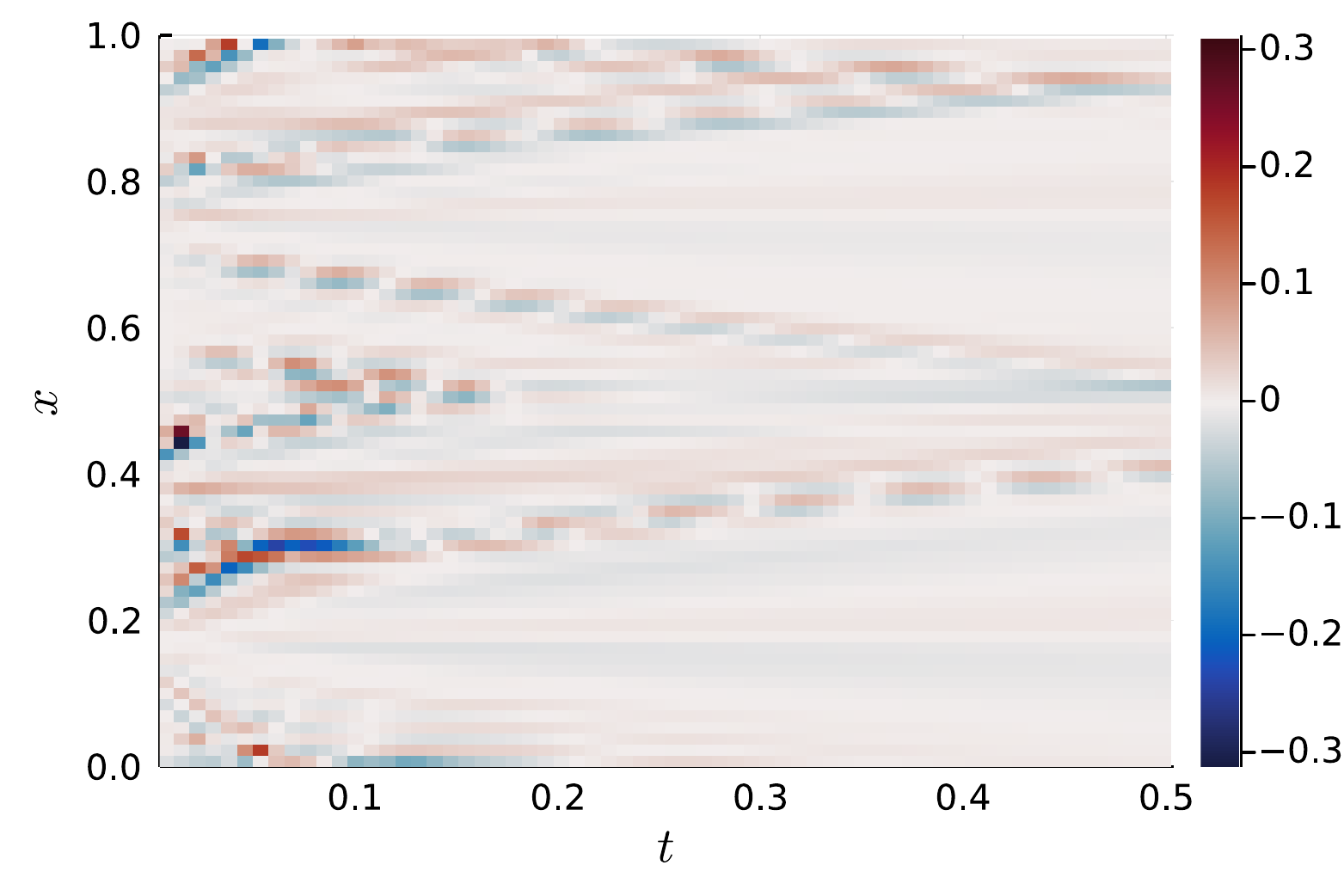}
        \caption{Prediction error by the model trained by \discopt}
        \label{fig:burgers-discopt-model-error}
    \end{subfigure}
    \hfill
    \begin{subfigure}[b]{0.48\textwidth}
        \includegraphics[width=\textwidth]{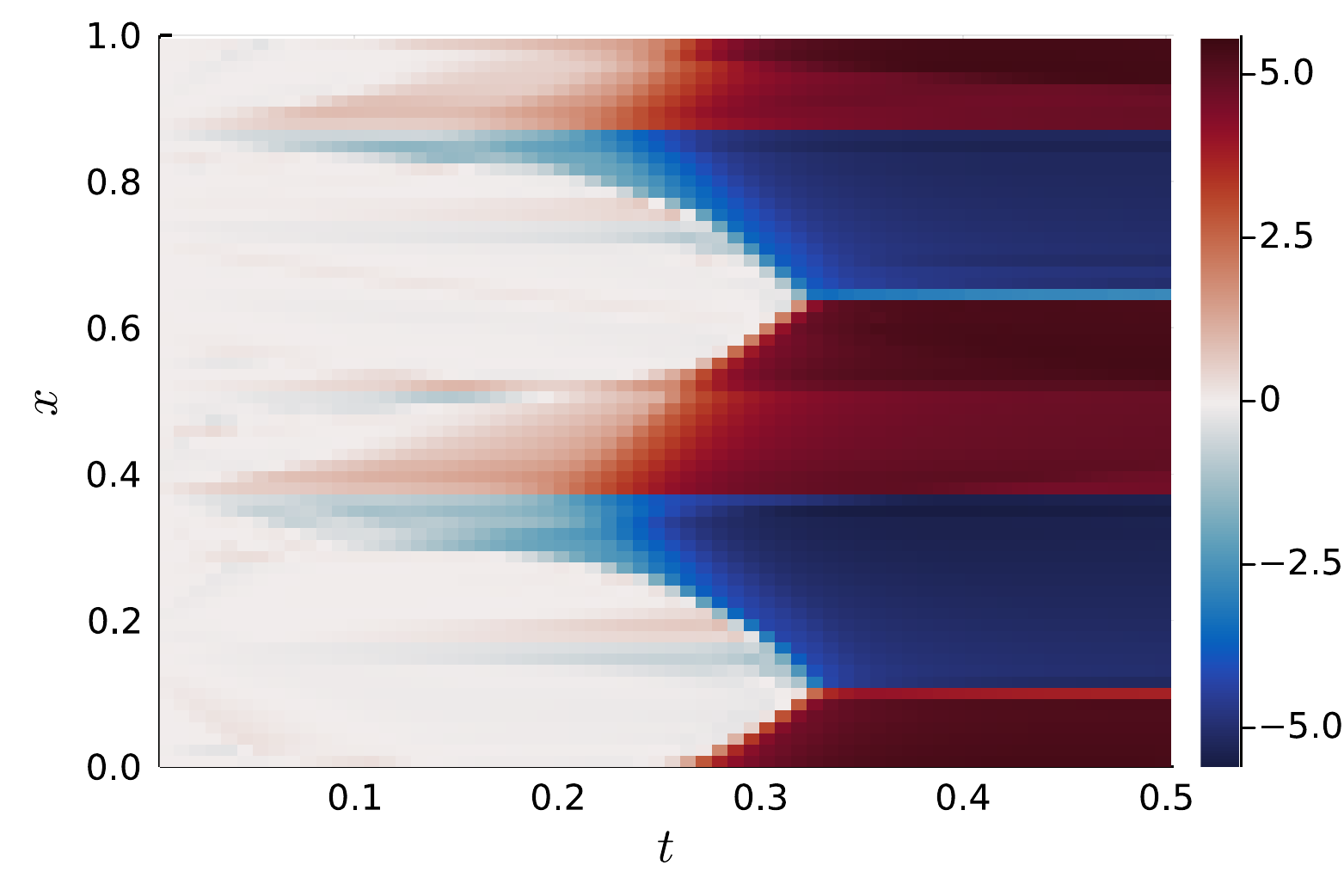}
        \caption{Prediction error by the model trained by derivative fitting}
        \label{fig:burgers-derivfit-model-error}
    \end{subfigure}
    \caption{
        The errors in the trajectory predictions made by the models trained by \discopt and derivative fitting,
        respectively.
    }
    \label{fig:burgers-example-predictions}
\end{figure}

\Cref{fig:burgers-trajfit-losses} shows how the RMSE on the training data of the two trajectory fitting models improves
during training.
From this figure, it is clear that while both models start out with approximately the same RMSE (slightly above 0.1),
the model trained with \discopt improves slightly faster and starts converging to a measurably lower error than the
other model after approximately 8000 epochs.
The lower training error is not due to overfitting, as the model trained with \discopt also produces measurably more
accurate results on the test data as shown in \Cref{table:burgers-results}.
Instead, the reason for the higher training error in \optdisc is due to the inaccuracy of the gradient of the loss
function: while the exact minimiser of the optimisation problem, i.e.~the optimal parameters \(\vartheta\), satisfy
\(\d{\text{Loss}}{\vartheta} = 0\), the gradient of the loss function is not computed exactly when using the adjoint
ODE formulation.
As a result, training by \optdisc does not converge to the truly optimal parameters or even to a local optimiser, but to
a different parameter vector that satisfies \(\d{\text{Loss}}{\vartheta} + (\text{adjoint ODE error}) = 0\).

\begin{figure}
    \centering
    \includegraphics[width=0.48\textwidth]{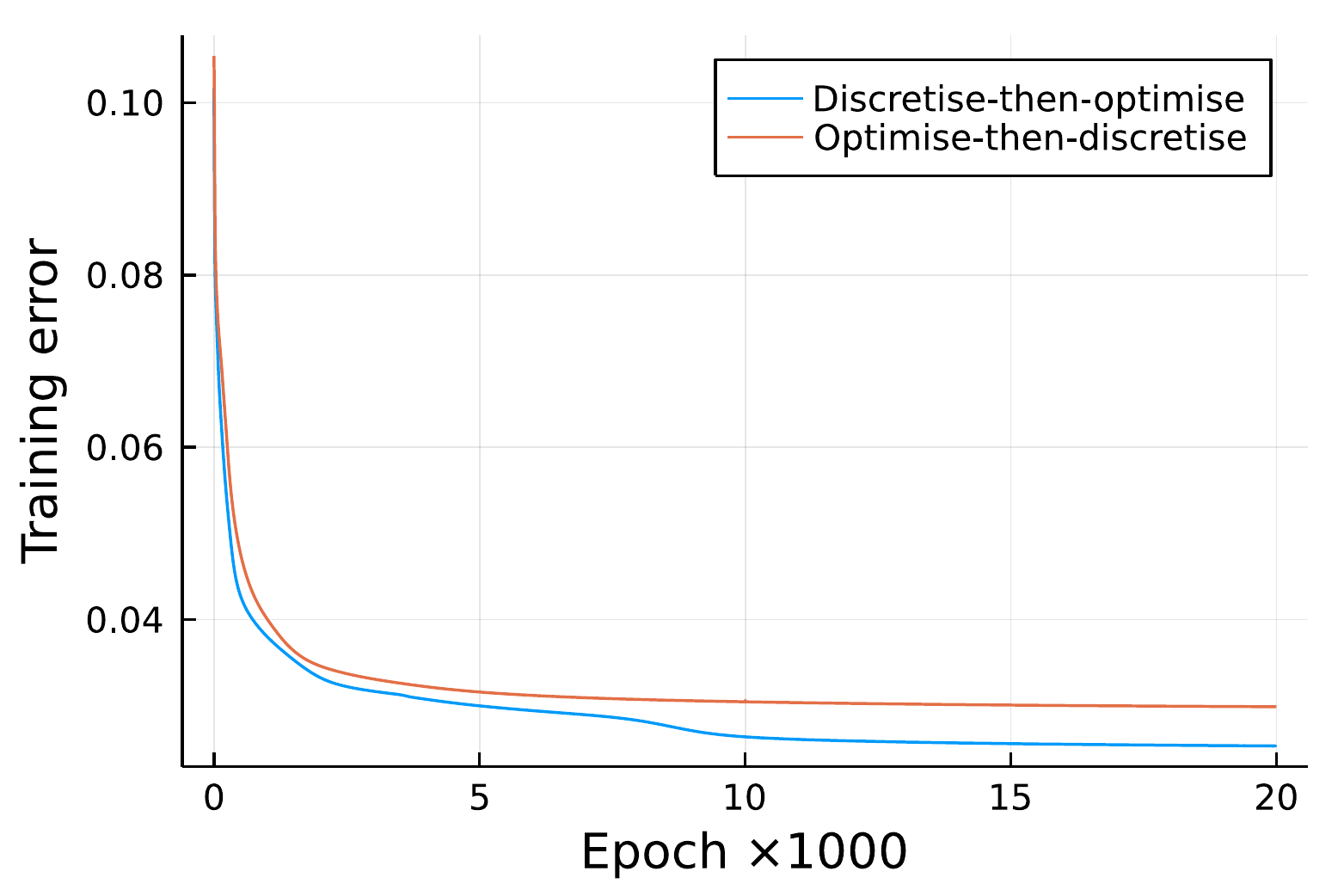}
    \caption{
        The RMSE history during training of the two models that are trained using trajectory fitting on the Burgers'
        test case.
    }
    \label{fig:burgers-trajfit-losses}
\end{figure}

\subsection{The \ks equation}
\label{sec:experiments/ks}
Since Burgers' equation is relatively predictable (resulting in shock waves which then dissipate over time while moving
through the domain), we also consider a more challenging case.
In this section, experiments are performed using the \ks equation:
\begin{align*}
    \pd{u}{t} &= -\frac{1}{2}\pd{}{x}\left(u^2\right) - \pd[2]{u}{x} - \pd[4]{u}{x},
\end{align*}
for \(x \in [0, 64]\) with periodic boundary conditions.
For a short description of the \ks equation and its terms and solution behaviour, see \ref{sec:data-generation/kuramoto-sivashinsky}.
Training data for the \ks equation was generated by discretising the PDE into 1024 finite volumes, solving the resulting
ODE using a third-order stiff ODE solver for \(t \in [0, 256]\), and down-sampling the resulting solution to \(N_x =
128\) finite volumes.
The resulting training data consists of \(N_p = 80\) trajectories used for training and 10 for testing.
Each trajectory consists of an initial state followed by \(N_t = 512\) additional snapshots with \(\Delta t = \frac{1}
{2}\) between snapshots.
All models trained in this section use the `large' neural network shown in \Cref{table:large-nn}, which is a CNN with
six convolutional layers.

The \ks equation is chaotic, meaning that arbitrarily small differences in the initial state \(\vec{u}(0)\) will
eventually lead to a completely different solution \(\vec{u}(t)\) for large \(t\).
As a result, all methods of approximating this PDE are expected to diverge from the training data at some point, meaning
that simply taking the RMSE between the predicted and actual trajectories is not a very useful metric for accuracy.
Instead, the accuracy of a method will be evaluated using the valid prediction time (VPT), which is the time until the
error between the approximation and the training data exceeds some pre-defined threshold.
Here, the VPT of a prediction \(\vec{u}(t)\) to a real trajectory \(\uref(t)\) is computed following the procedure used
by Pathak \etal\cite{pathak2018hybrid}.
First, the average energy of the real trajectory is computed as
\begin{align*}
    E_{\text{avg}} &= \sqrt{\frac{1}{N_t} \sum_{i = 1}^{N_t} \norm{\uref(t_i)}^2}.
\end{align*}
Then, the valid prediction time is given by
\begin{align}
    \label{eq:vpt-definition}
    \text{VPT}(\uref, \vec{u}, t_{1, 2, \dots, N_t})
    &= \min\left\{ t_i~|~\norm{\vec{u}(t_i) - \uref(t_i)} \geq 0.4E_{\text{avg}} \right\}.
\end{align}

One property shared among chaotic processes is that the difference between two solutions \(\uref, \vec{u}\) grows
approximately exponentially in time:
\begin{align}
    \label{eq:chaos-exponential-trajectory-divergence}
    \norm{\uref(t) - \vec{u}(t)}
    &\approx e^{\lambda_{\text{max}}t} \norm{\uref(0) - \vec{u}(0)}.
\end{align}
In this equation, \(\lambda_{\text{max}}\) is the Lyapunov exponent of the ODE, which determines the growth
rate of the error between two solutions.
The inverse of the Lyapunov exponent is the Lyapunov time \(\tlyap = \lambda_{\max}^{-1}\), which can be
interpreted as the time it takes for the difference between two similar trajectories to increase by a factor \(e\).
Since the Lyapunov time represents the time scale on which trajectories diverge, the valid prediction times of models
will be expressed as multiples of \(\tlyap\).

\subsubsection{Derivative fitting}
\label{sec:experiments/ks/derivfit}
As is the case for Burgers' equation, the simplest training approach available is to train using derivative fitting.
Since adding a penalty term to the neural network is not found to help significantly, this penalty term is omitted for
derivative fitting for the \ks equation, meaning only one model is trained with derivative fitting.
This model is trained on all snapshots of the 80 trajectories of the training data.
The model is trained for 1000 epochs with the batch size set to 128.

\subsubsection{\Optdisc}
\label{sec:experiments/ks/optdisc}
The \optdisc training approach is tested on the same neural network.
With this approach, the parameter \(N_t\), the number of snapshot predictions computed from the initial condition, must
be chosen.
For the models trained on Burgers' equation, \(N_t\) was equal to the number of snapshots in the training data, but this
is not expected to yield accurate models for the chaotic \ks equation as described in \Cref{sec:theory/trajfit}.
Here, two models are trained with the \optdisc approach with two different choices for \(N_t\).
The first model is trained on the first 25 snapshots from each trajectory (i.e.~an initial condition and \(N_t = 24\)
additional snapshots, corresponding to one Lyapunov time), and the other is trained on the first 145 snapshots from each
trajectory (i.e.~with \(N_t = 144\), corresponding to six Lyapunov times).
This is done so that the effect of the trajectory length \(N_t\) on the model accuracy can also be studied.
For both models, the forward and adjoint ODEs are both solved using \texttt{KenCarp47}, a \nth[4]-order accurate
additive Runge-Kutta method due to Kennedy and~Carpenter \cite{kennedy2019higher} that is implicit in \(f\) but explicit
in the neural network term.
Using this ODE solver was found to be computationally more efficient than using either a fully explicit or fully
implicit Runge-Kutta method (see sections 5.5.3 and 5.6.1 of \cite{mastersthesis}).

In order to avoid very large gradients causing the training to fail, gradient clipping is applied to the
gradients before applying the optimiser.
This way, gradients of which the norm exceeds some constant \(r\) are scaled such that their norms are exactly equal to
\(r\), thereby avoiding very large gradients while leaving small gradients unchanged.
Here, gradient clipping is used with \(r = 10^{-2}\).
The model trained on short trajectories is trained for 1000 epochs with a batch size of 10.
The model trained on long trajectories is only trained for 100 epochs since the longer trajectories make the training
much slower.
While this means that the second model is not trained until convergence, the effect on the accuracy of the resulting
models is found to be small compared to the difference in accuracy reported in \Cref{sec:experiments/ks/results}.
Both models are trained with a batch size of 8.

As described in \Cref{sec:theory/trajfit}, trajectory fitting on chaotic systems can cause problems due to exploding
gradients, especially when training on long trajectories.
To see if weighing the loss function as in \eqref{eq:exp-weighted-mse} mitigates this problem, four more models are
trained using \optdisc on long trajectories, using the weighted MSE as loss function with \(c \in \{0.5, 1.0, 1.5,
2.0\}\).
The number of epochs, batch size, and other parameters for these models is the same as those for the model trained on
long trajectories with a simple MSE loss function.

\subsubsection{\Discopt}
\label{sec:experiments/ks/discopt}
To test the \discopt method for the \ks equation, the PDE is solved in the pseudospectral domain using an exponential
integrator.
More information about this approach is given in \ref{sec:ks-pseudospectral}.

In order to be able to compare more directly with the previous models, the closure term is given by the same neural
network as used in earlier experiments, meaning that its input and output are in the physical domain.
As such, the neural network term is preceded by an inverse Fourier transform and followed by a Fourier transform:
\begin{align}
    \label{eq:ks-pseudo-spectral-with-closure}
    \d{}{t}\hat{\vec{u}}
    &= \left( \mat{\Lambda}^2 - \mat{\Lambda}^4 \right)\hat{\vec{u}}
        - \frac{i}{2}\mat{\Lambda}\mathcal{F}\left( \left( \mathcal{F}^{-1}\hat{\vec{u}} \right)^2 \right)
        + \mathcal{F}\left( \text{NN}\left( \mathcal{F}^{-1}\hat{\vec{u}}; \vartheta \right) \right).
\end{align}

As is the case for the \optdisc tests, one must choose \(N_t\), the number of time steps that are predicted with the
model.
In \Cref{sec:numerical-experiments/burgers}, the number of time steps to predict was 64, equal to the number of
snapshots in the training data, with good results.
In their experiments with neural closure models for two-dimensional incompressible fluid flow problems, List
\etal\cite{list2022learned} refer to this as the number of unrolled steps, and find that this has a significant effect
on the stability of the resulting model.
As such, a number of different models are trained by unrolling with different numbers of time steps.
Specifically, 11 different models are trained with \(N_t \in \left\{ 1, 2, 4, 8, 15, 30, 60, 90, 100, 110, 120
\right\}\).
The model with \(N_t = 120\) is trained on the first 121 snapshots of each of the 80 training trajectories.
For the models with \(N_t \in \left\{ 90, 100, 110 \right\}\), the same 80 trajectories are truncated to the desired
number of steps.
For the remaining models, the training trajectories are split into multiple shorter trajectories as shown in
\Cref{fig:trajectory-splitting-example}.
In this way, each training trajectory of 121 snapshots (an initial condition followed by 120 additional time steps) can
be split into 4 trajectories of 31 snapshots each, or 15 trajectories of 9 snapshots each, and so on.

\begin{figure}
    \centering
    \begin{subfigure}[b]{0.48\textwidth}
        \includegraphics[width=\textwidth]{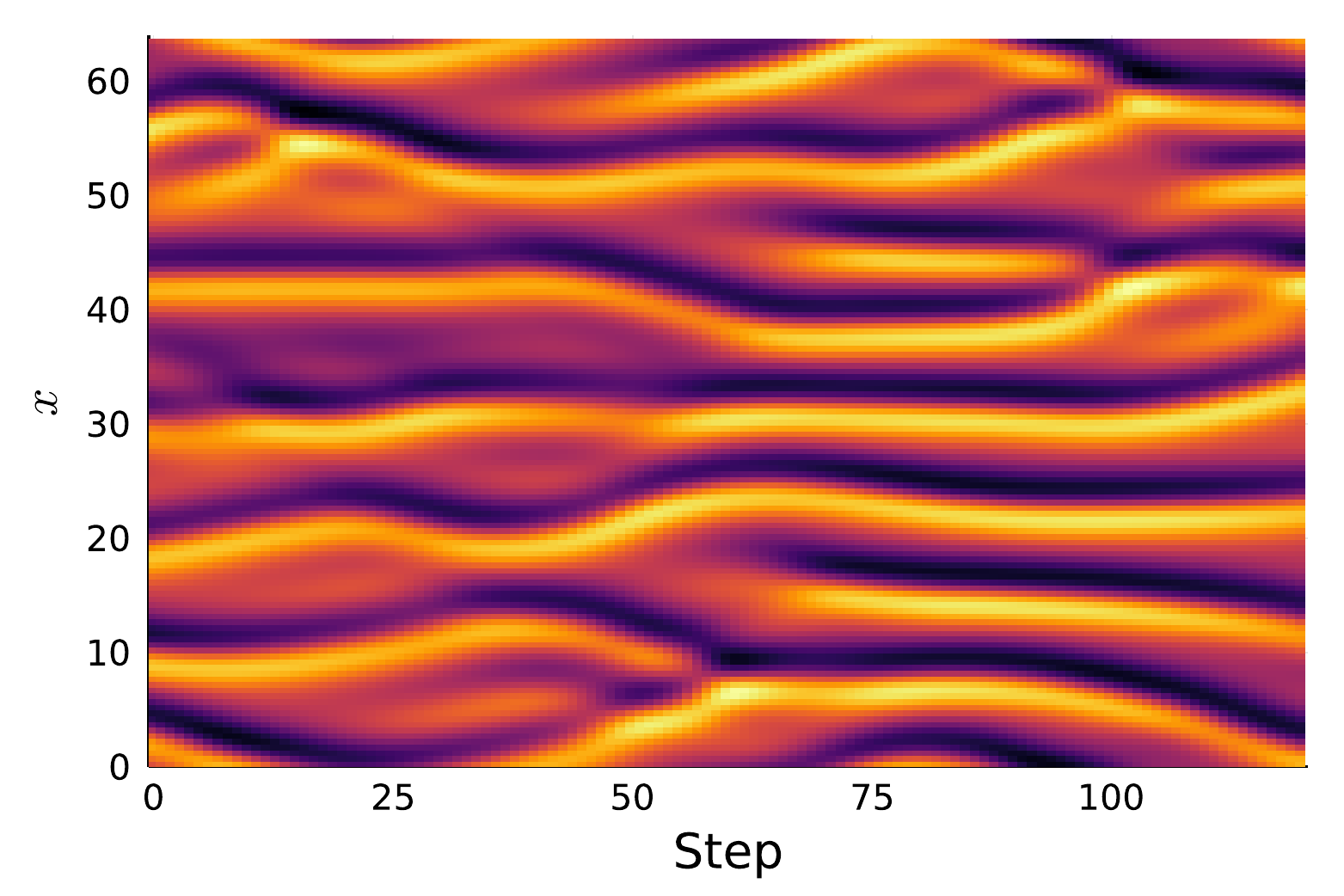}
        \caption{A `long' trajectory consisting of the initial state and 120 additional snapshots.}
    \end{subfigure}
    \hfill
    \begin{subfigure}[b]{0.48\textwidth}
        \includegraphics[width=\textwidth]{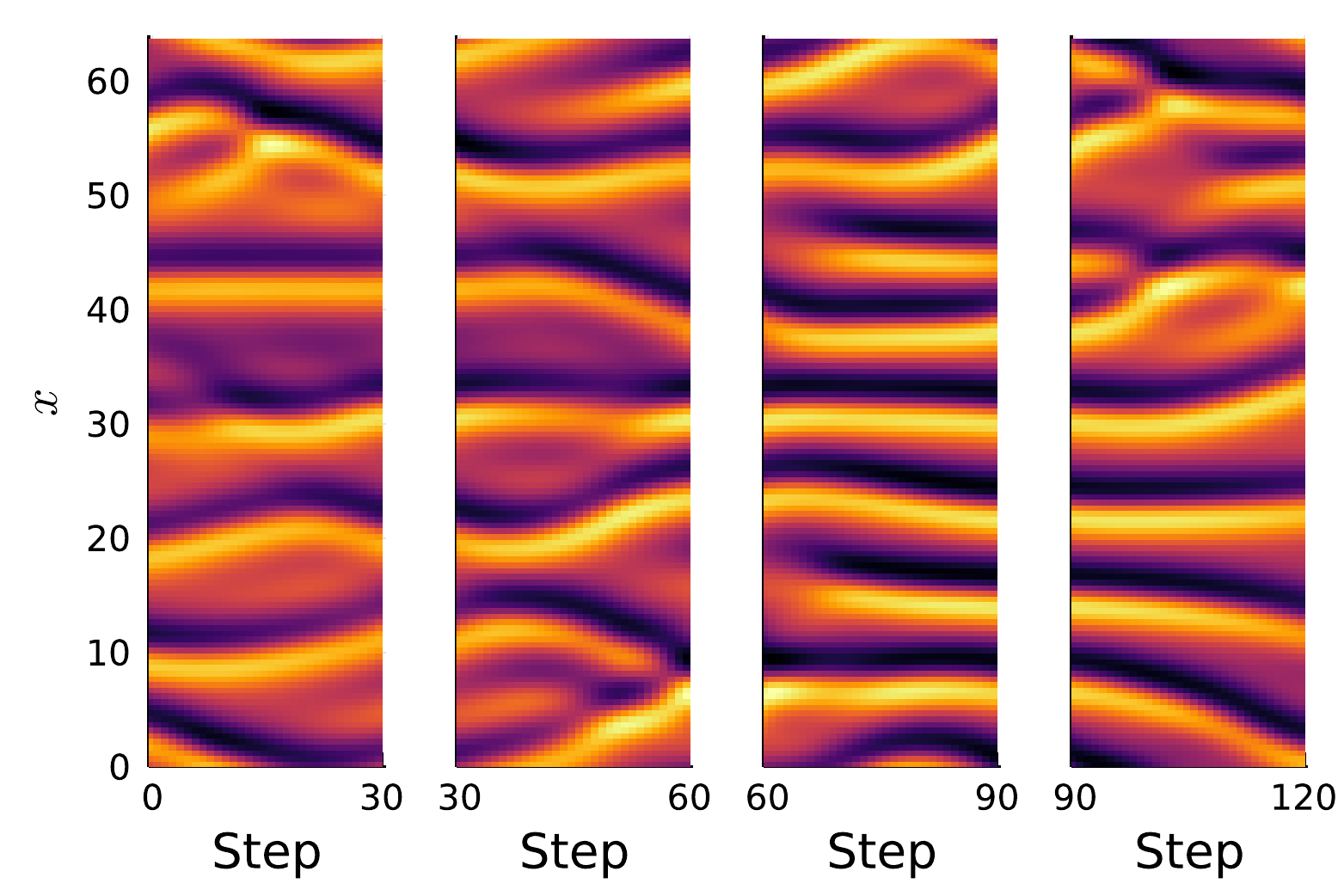}
        \caption{Four `short' trajectories, each consisting of an initial condition and 30 additional snapshots.}
    \end{subfigure}
    \caption{
        A visual representation of how training data from the \ks equation is split into smaller trajectories.
        The last snapshot of one trajectory is equal to the initial condition of the next trajectory.
    }
    \label{fig:trajectory-splitting-example}
\end{figure}

This way, all \discopt models are trained on the first 121 snapshots of each trajectory in the training data (or
slightly fewer).
All these models are trained for 5000 epochs with a batch size of 8.

\FloatBarrier

\subsubsection{Results}
\label{sec:experiments/ks/results}
\Cref{table:ks-main-results} shows the minimum, average, and maximum VPTs of neural ODEs trained with derivative fitting
and \optdisc, and some example trajectories and predictions are shown in \Cref{fig:ks-some-results}.
Note that the VPTs in \Cref{fig:ks-trajfit-short-prediction,fig:ks-trajfit-long-prediction} only show one of the ten test trajectories, meaning that the VPTs in these figures do not correspond to the averages shown in \Cref{table:ks-main-results}.
From this table,
it can be seen that \optdisc fitting works best on short trajectories, where the resulting models slightly outperform
models trained by derivative fitting.
Training on long trajectories results in worse performance, even if the loss function is exponentially weighted as in \eqref{eq:exp-weighted-mse} to mitigate the
exploding gradients problem.
This is likely due to the fact that the adjoint ODE methods introduce an error in the gradients used during training.
As is generally the case for ODE solutions, this error increases with the time interval over which the ODE is solved.
Hence, training on longer trajectories introduces a larger error in the gradients, which reduces the accuracy of the
resulting model even if the error function is weighted to mitigate the exploding gradients problem.

\begin{table}[h]
    \centering
    \caption{
        An overview of the performance of different models tested on the \ks equation, sorted by training approach.
        \Cref{fig:ks-results-overview} summarises all results for the \ks equation.
    }
    \label{table:ks-main-results}
    \vspace{1em}
    \begin{tabular}{l l r r r}
        \toprule
        \multicolumn{2}{l}{\multirow{2}{*}{Training approach}}        & \multicolumn{3}{c}{VPT on test data} \\
        \cmidrule{3-5}
        \multicolumn{2}{l}{}                                          & Min  & Avg  & Max  \\
        \midrule
        \multicolumn{2}{l}{Coarse ODE}                                & 1.17 & 1.93 & 3.00 \\
        \midrule
        \multicolumn{2}{l}{Derivative fitting}                        & 4.17 & 5.36 & 7.54 \\
        \midrule
        \multirow{2}{*}{\Optdisc} & Short trajectories (\(N_t = 24\)) & 4.08 & 5.84 & 8.29 \\
                                  & Long trajectories (\(N_t = 144\)) & 2.38 & 3.38 & 4.67 \\
        \midrule
                                  & \(c=0.5\)                         & 2.42 & 4.20 & 5.38 \\
        \Optdisc, \(N_t=144\)     & \(c=1.0\)                         & 2.96 & 4.38 & 6.29 \\
        decaying error weights    & \(c=1.5\)                         & 3.29 & 4.58 & 5.88 \\
                                  & \(c=2.0\)                         & 2.71 & 4.29 & 5.75 \\
        \midrule
        \multirow{2}{*}{\Discopt} & Short trajectories (\(N_t = 30\)) & 4.92 & 7.10 & 9.12 \\
                                  & Long trajectories (\(N_t = 120\)) & 4.12 & 5.33 & 7.38 \\
        \bottomrule
    \end{tabular}
\end{table}

\Cref{fig:ks-pseudospectral-vpt-chart} shows the minimum, average, and maximum VPTs of the 11 models trained using
\discopt on different numbers of unrolling steps.
It can be seen that models trained on \(60\) or more steps perform worse than models trained on fewer steps, although
performance continues to improve after 1000 epochs.
After 5000 epochs, the model trained on 30 steps performs the best.
The model trained on 120 steps is found to perform very badly after 1000 epochs, although performance is similar to that
of other models after 5000 epochs.
The reason for the poor performance after 1000 epochs is described in \Cref{sec:theory/trajfit}: the long time interval
used for training, combined with the chaotic nature of the \ks equation, means that the loss function is most sensitive
to the behaviour for large \(t\).
This initially prevents the model from becoming more accurate in the short term, resulting in a very short VPT.

\begin{figure}
    \centering
    \begin{subfigure}[b]{0.48\textwidth}
        \includegraphics[width=\textwidth]{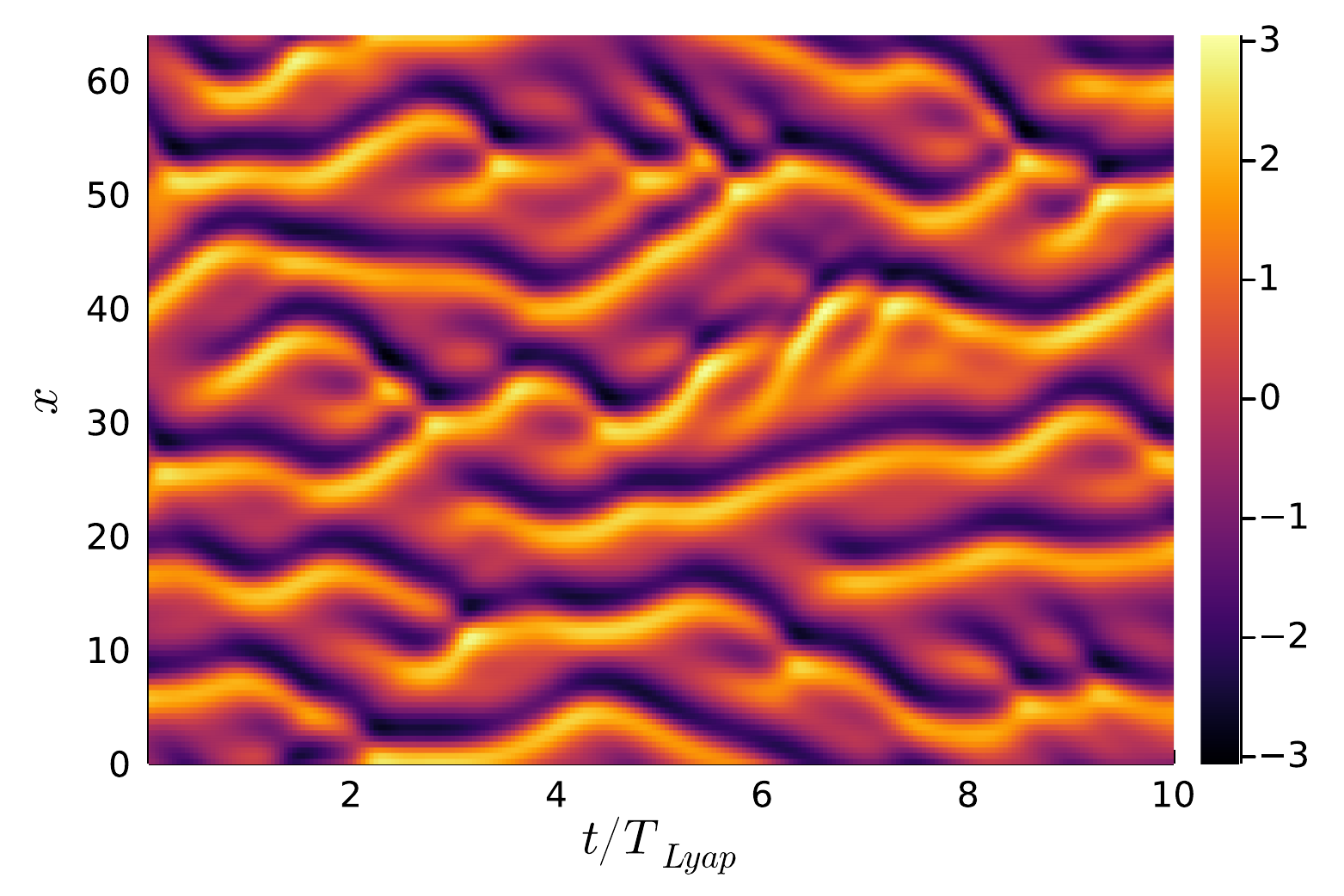}
        \caption{A trajectory of the K-S equation from the test data.}
        \label{fig:ks-actual-trajectory}
    \end{subfigure} \\
    \begin{subfigure}[b]{0.48\textwidth}
        \includegraphics[width=\textwidth]{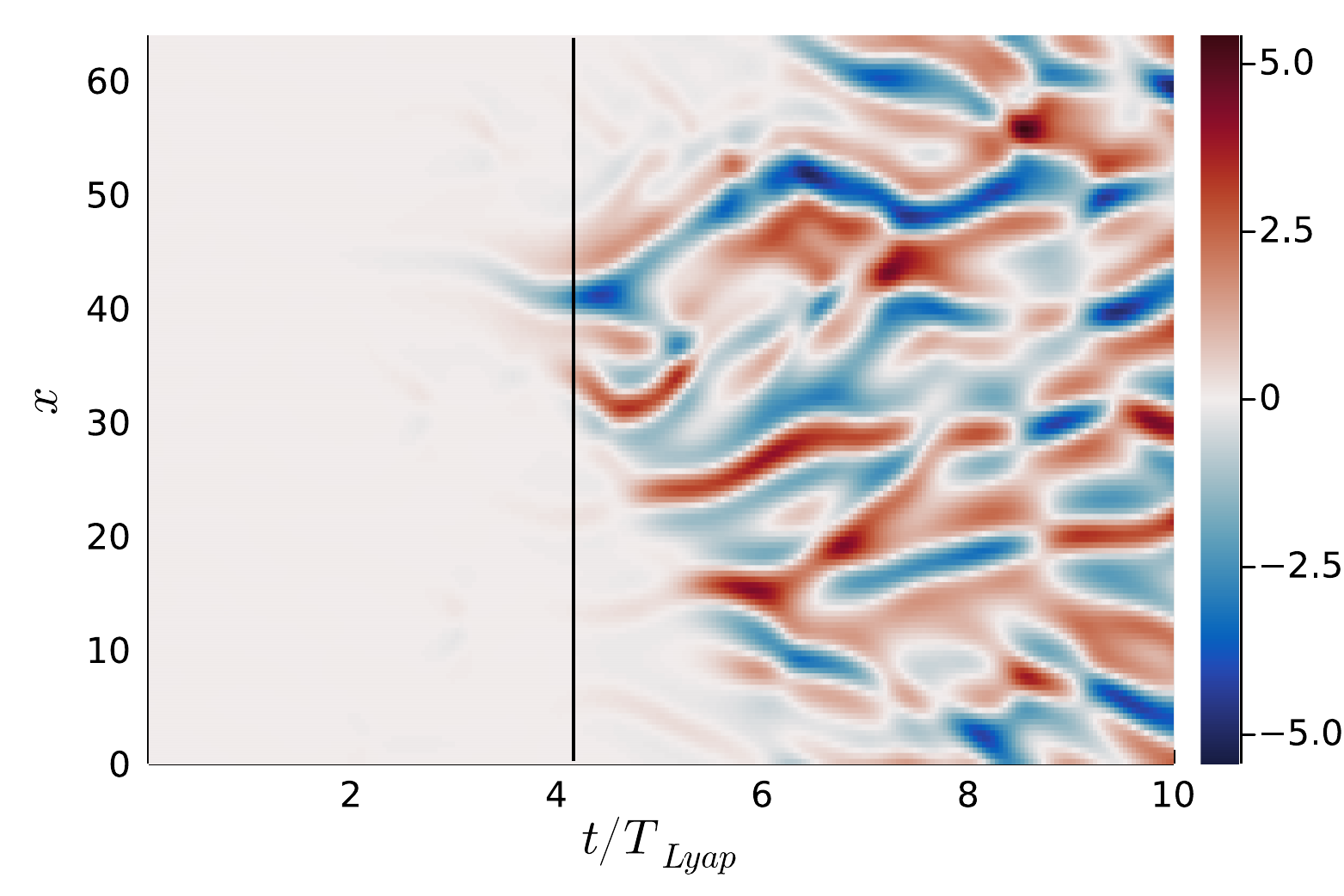}
        \caption{The prediction error for this trajectory of the closure model trained on short trajectories.}
        \label{fig:ks-trajfit-short-prediction}
    \end{subfigure}
    \hfill
    \begin{subfigure}[b]{0.48\textwidth}
        \includegraphics[width=\textwidth]{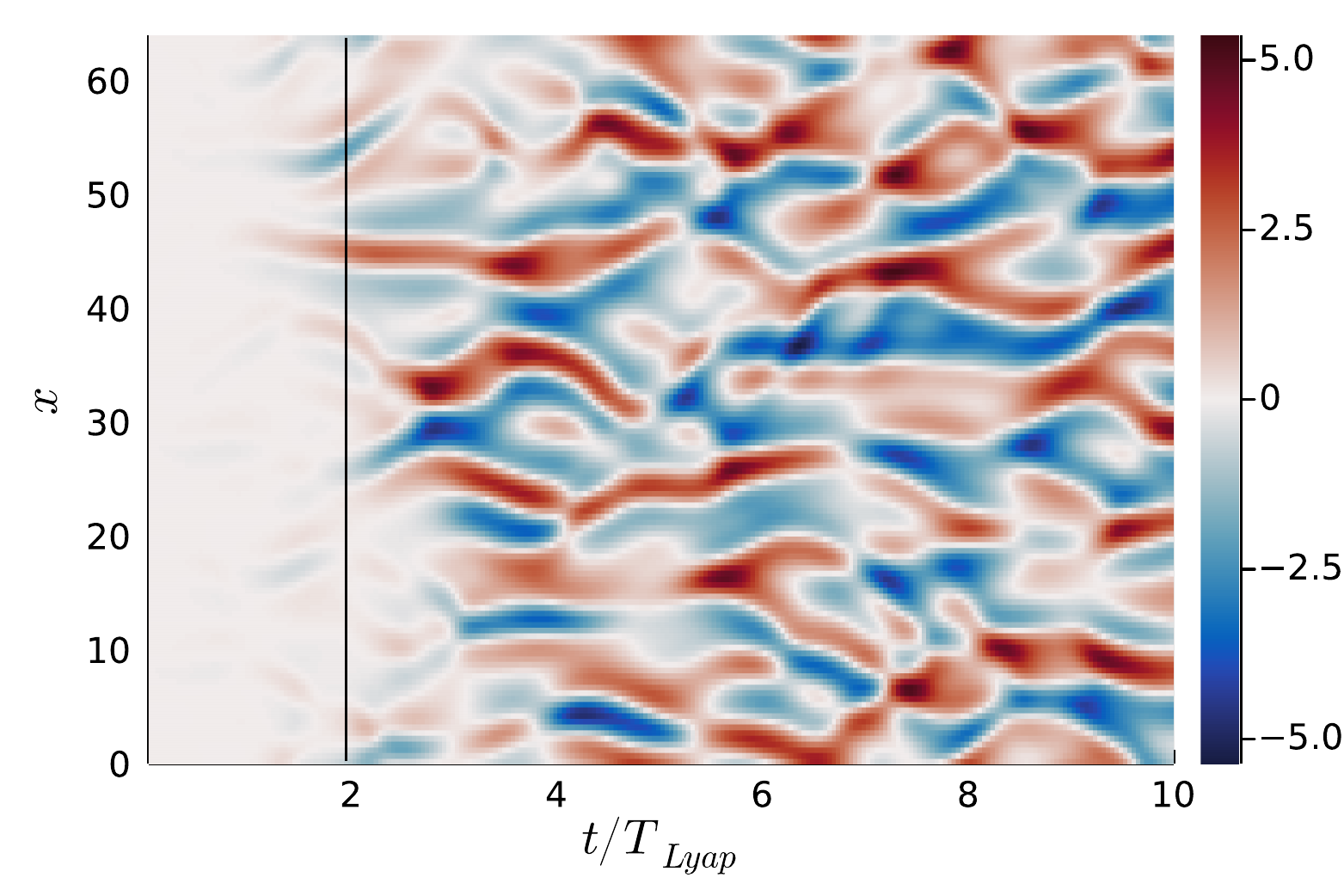}
        \caption{The prediction error for this trajectory of the closure model trained on long trajectories.}
        \label{fig:ks-trajfit-long-prediction}
    \end{subfigure}
    \caption{
        An example trajectory from the test data, and the prediction error of two different models obtained for this
        trajectory.
        Notice how the model trained on long trajectories produces significantly greater error in the short-term, and
        also achieves a lower VPT for this trajectory as a result (indicated by the vertical black lines in
        \Cref{fig:ks-trajfit-short-prediction,fig:ks-trajfit-long-prediction}).
    }
    \label{fig:ks-some-results}
\end{figure}

\begin{figure}
    \centering
    \includegraphics[width=0.70\textwidth]{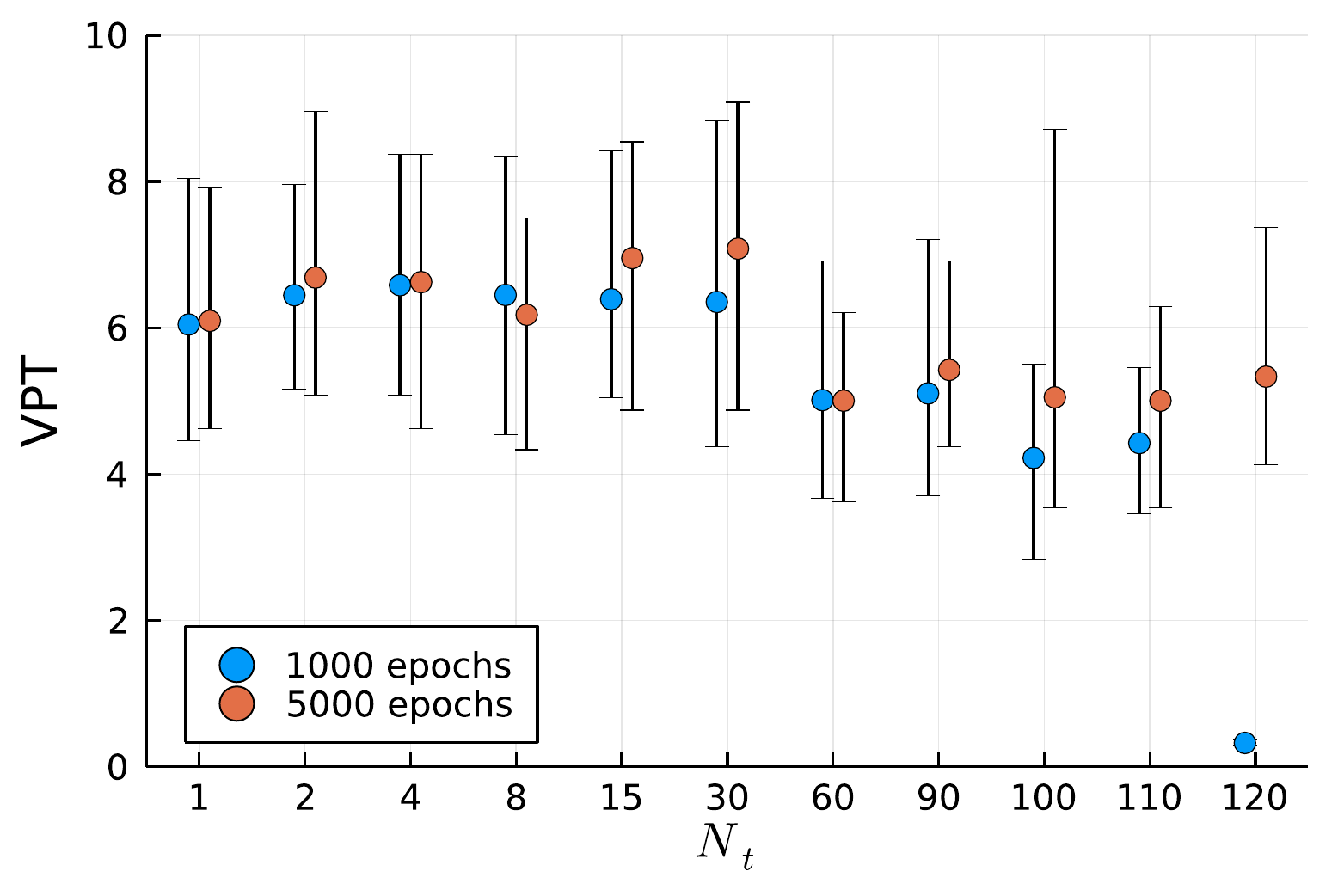}
    \caption{
        The minimum, average, and maximum VPT across the 10 test trajectories for models trained for the given number of
        time steps \(N_t\) on the \ks equation with \discopt.
    }
    \label{fig:ks-pseudospectral-vpt-chart}
\end{figure}

\begin{figure}
    \centering
    \includegraphics[width=0.70\textwidth]{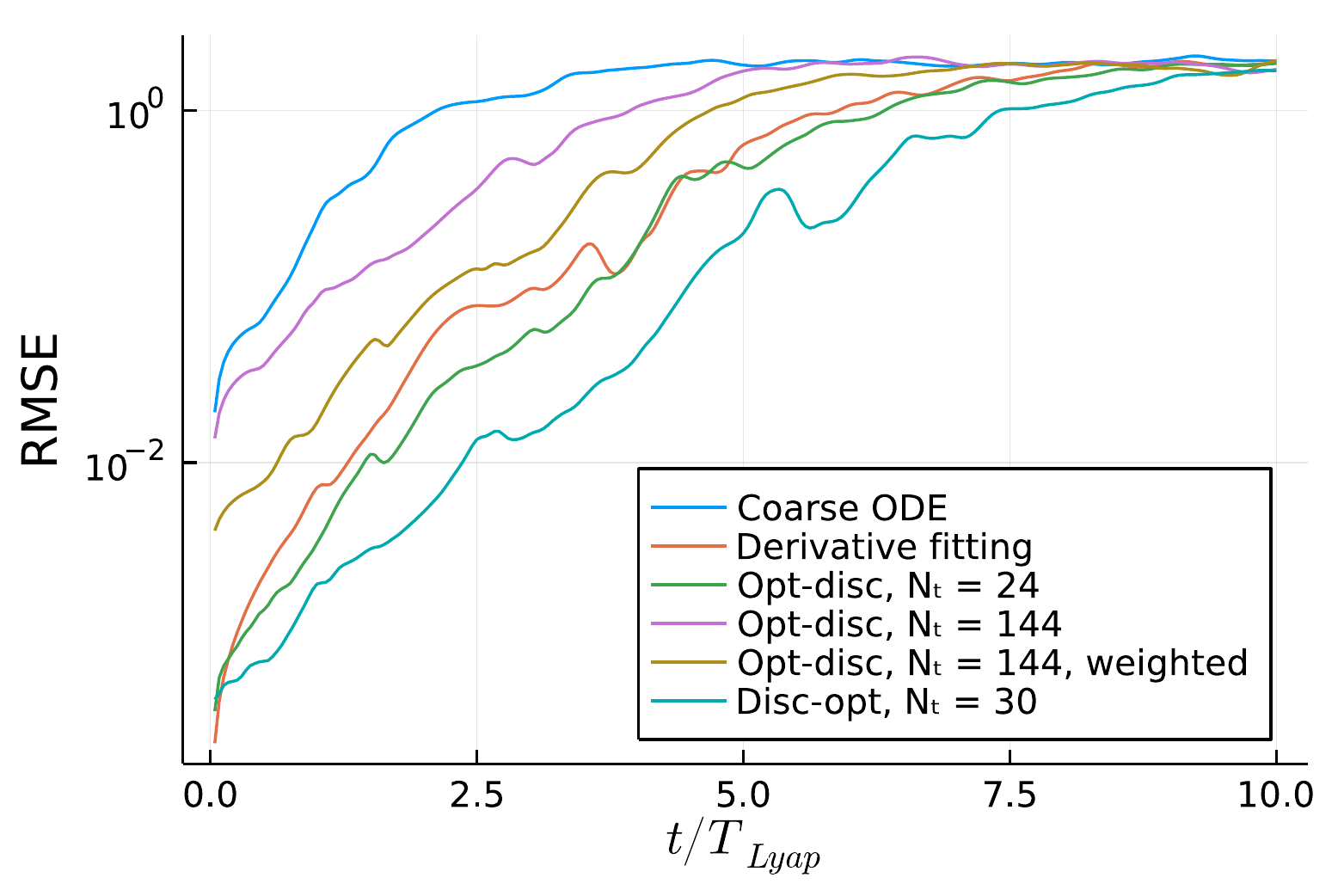}
    \caption{
        A plot of the RMSE on the \ks equation of some of the different models in \Cref{table:ks-main-results} as a function of \(t\)
    }
    \label{fig:ks-results-overview}
\end{figure}

\FloatBarrier

\section{Conclusions}
\label{sec:conclusion}
The goal of this paper is to make a general comparison of approaches for neural closure models, and make recommendations
based on the results of this comparison.
To this end, a variety of different training approaches were evaluated on two different test cases derived from partial
differential equations.
Furthermore, two theorems were given that provide upper bounds on the long-term error of a neural closure model based on
the short-term error that is used as the objective function during training.

The simplest approach, derivative fitting, is found to perform approximately as well as trajectory fitting on the \ks
equation, but performs significantly worse on Burgers' equation.
\Cref{thm:error-bound-continuous} implies that in general, models that are trained with derivative fitting may indeed
produce inaccurate trajectories, even if they achieve high accuracy during training.
It is not entirely clear what causes derivative fitting to perform well on the \ks equation and not on Burgers' equation.
The performance may depend on the ODE for which the neural network acts as a closure term, or on the time integration
method used to solve the ODE.
However, the performance of derivative fitting may also be sensitive to specific parameters of the training procedure.
For example, the use of derivative fitting for LES closure models produces accurate models in the work of Guan
\etal\cite{guan2022learning}, but not in the work of MacArt \etal\cite{macart2021embedded}.
Since derivative fitting is computationally much cheaper than trajectory fitting, it is generally worthwhile to try training a neural closure model with derivative fitting first.
If derivative fitting produces poor models as it does in the Burgers' equation tests, one can still switch to trajectory
fitting.
Furthermore, if derivative fitting gives stable but inaccurate models, another approach is
to train with derivative fitting for a relatively small number of epochs, and to continue training the resulting model with trajectory fitting (see Section 5.1.2 of Melchers~\cite{mastersthesis} for details).
This approach produces accurate models like trajectory fitting, and is computationally more efficient by reducing
the number of iterations of the slower trajectory fitting that have to be performed in order to train the model to
convergence.

Regarding trajectory fitting, the preferable approach is to use the \discopt strategy, rather than \optdisc, which is in
agreement with the results obtained by Onken and Ruthotto~\cite{onken2020discretize}.
The reason for this is likely that the \discopt approach computes gradients more accurately, allowing the model to be
trained to a higher accuracy.
This results in slightly faster convergence during training as well as in a smaller error overall.
This also means that a systematic comparison between \optdisc and \discopt is not always straightforward; in the \ks
case the pseudospectral method used for the \discopt tests is a more accurate spatial discretisation of the PDE than the
finite volume method used in the derivative fitting and \optdisc tests.
This is a fundamental issue in testing training approaches for neural closure models, as many test problems require
specific ODE solvers to efficiently obtain accurate solutions.

When training by trajectory fitting, the length of trajectories used for training must be chosen carefully, both for
\optdisc and \discopt.
This is in line with our \Cref{thm:error-bound-discrete}, stating that models that produce accurate
predictions in the short term may still produce inaccurate predictions in the long-term.
However, this is not always the case as \Cref{thm:error-bound-discrete} only provides an upper bound for the
error.
On the \ks data, for example, the models trained by derivative fitting and by \discopt with unrolling a single time step
both produce good long-term predictions.

For \optdisc, training on long trajectories results in less accurate models due to the exploding gradients problem.
This problem can be mitigated by introducing a weighted error function, although the resulting approach still produces
less accurate models than \discopt trajectory fitting due to the increased error in the gradient computation.
For the \ks test case, derivative fitting produces better results than \optdisc on long trajectories.
For \discopt, fairly accurate models can be obtained by training on long trajectories, although training this way
converges much more slowly than when training on shorter trajectories.

As machine learning techniques become more popular as a way to learn models from data, the existence of general
recommendations and rules of thumb becomes increasingly important to reduce the amount of trial and error required to
obtain accurate models.
Overall, the work presented here provides a set of recommendations for neural closure models.
Notwithstanding, we point to a few remaining open questions.
Most notably, it is not fully clear yet what properties of an ODE system determine whether or not derivative fitting produces
accurate models.
Furthermore, for trajectory fitting it was argued that the number \(N_t\) of time steps computed during training should
be chosen carefully, but in this work good values for \(N_t\) were found by trial and error.
It is not clear how a good value for \(N_t\) can be chosen ahead of time.

\section*{Declaration of competing interests}
The authors report no competing interests.

\section*{Acknowledgements}
Large parts of this work were completed as part of a master thesis on neural closure models at the Department of
Mathematics and Computer Science at Eindhoven University of Technology (TU/e) and at the Centrum Wiskunde \& Informatica
(CWI) in Amsterdam.

\section*{Funding statement}
This publication is part of the project “Discretize first, reduce next" (with project number VI.Vidi.193.105) of the
research programme Vidi which is (partly) financed by the Dutch Research Council (NWO).

\section*{Author contributions}
\textbf{Daan Crommelin}: conceptualisation, writing - review \& editing, supervision, project administration.
\textbf{Barry Koren}: conceptualisation, writing - review \& editing, supervision, project administration.
\textbf{Hugo Melchers}: conceptualisation, methodology, software, formal analysis, investigation, writing - original draft, visualisation.
\textbf{Vlado Menkovski}: conceptualisation, writing - review \& editing, supervision, project administration.
\textbf{Benjamin Sanderse}: conceptualisation, writing - review \& editing, supervision, project administration, funding acquisition.

\bibliography{references}{}
\bibliographystyle{plain}

\appendix

\section{Data generation}
\label{sec:data-generation}
Solutions to the Burgers and \ks equations are computed using the finite volume method.
For both equations, the initial states \(\vec{u}(0)\) are randomly generated as the sum of random sine and cosine waves
with wave numbers \(1 \leq k \leq 10\):
\begin{align}
    \label{eq:random-smooth-initial-state}
    \vec{u}_i(0)
    = \text{Re}\left( \sum_{k = 1}^{10}\hat{u}_k \exp(2\pi i k / N_x)
    + \sum_{k = 1}^{10}\hat{u}_{-k} \exp(-2\pi i k / N_x) \right),
\end{align}
where \(N_x\) is the number of finite volumes and the \(\hat{u}_k, k = \pm 1, \dots, \pm 10\) are independent and
distributed according to a unit Gaussian distribution.
The resulting initial conditions are then multiplied by a constant so that \(\max_i \abs{\vec{u}_{i}(0)} = 2\).

\subsection{Burgers' equation}
\label{sec:data-generation/burgers}
Burgers' equation is a PDE over one variable, the momentum \(u(x, t)\), as a function of space and time.
Its general form is as follows:
\begin{align}
    \label{eq:burgers2}
    \pd{u}{t} &= \nu \pd[2]{u}{x} - \frac{1}{2}\pd{}{x}\left( u^2 \right),
\end{align}
where \(\nu \geq 0\) is a constant.
The PDE can be seen as a highly simplified one-dimensional version of a fluid flow problem, with a linear second-order
diffusion term that models the effects of fluid viscosity, and a quadratic convection term that resembles the convection
term in the Navier-Stokes equation.
In this section, Burgers' equation will be used with periodic boundary conditions and a domain length of \(1\),
i.e.~\(u(x + 1, t) = u(x, t)\) for all \(x, t\).

The Burgers equation \eqref{eq:burgers2} is solved with \(\nu = 0.0005\).
The spatial computational domain is discretised using the first-order accurate spatial discretisation given by
Jameson~\cite{jameson2007energy}:
\begin{subequations}
    \begin{align}
        \label{eq:quadratic-flux-discretised}
        \d{\vec{u}_i}{t} = f(\vec{u})_i
        &= \frac{\nu}{\Delta x^2}\left( \vec{u}_{i-1} - 2\vec{u}_i + \vec{u}_{i + 1} \right)
            - \frac{1}{\Delta x}\left( \vec{f}_{i + 1/2} - \vec{f}_{i - 1/2} \right), \\
        \label{eq:quadratic-flux-artificial-diffusion}
        \text{where } \vec{f}_{i + 1/2}
        &= \frac{1}{6}\left( \vec{u}_i^2 + \vec{u}_i\vec{u}_{i+1} + \vec{u}_{i+1}^2 \right)
            - \alpha_{i + 1/2} \left( \vec{u}_{i + 1} - \vec{u}_i \right), \\
        \text{and } \alpha_{i + 1/2}
        &= \frac{1}{4}\abs{\vec{u}_i + \vec{u}_{i + 1}} - \frac{1}{12}\left( \vec{u}_{i + 1} - \vec{u}_i \right).
    \end{align}
\end{subequations}
The resulting ODEs are solved for \(t \in [0, 0.5]\) using the \texttt{Tsit5} algorithm~\cite{tsitouras2011runge}, which
is a fourth-order, five-stage Runge-Kutta method with embedded error estimator.
This ODE solver was chosen following the recommendations of the DifferentialEquations.jl documentation
\cite{recommendedODEsolvers}.
In total, 128 solutions are obtained, each from a random initial state according to \eqref{eq:random-smooth-initial-state}.
Of these solutions, 96 are used for training and the remaining 32 are used for testing.
The solutions are then down-sampled by a factor of 64 in space.
This way, training data is created to allow a neural network to work on the low-fidelity (downsampled) initial
conditions, but to still approximate the original high-fidelity solution.
The downsampling is performed by averaging the solution over chunks of 64 finite volumes.
This is a necessary step, since training data obtained through solving an ODE \(\d{\vec{u}}{t} = f(\vec{u})\) would
trivially allow a neural closure model \(\d{\vec{u}}{t} = f(\vec{u}) + \text{NN}(\vec{u}; \vartheta)\) to produce very
accurate predictions when \(\text{NN}(\vec{u}; \vartheta) \approx 0\).
The coarse-grid solution is saved with a time step of \(\Delta t = 2^{-7}\) between snapshots, meaning that each
coarse-grid solution consists of an initial condition followed by 64 additional snapshots.

Note that when numerically solving PDEs, choosing a first-order accurate spatial discretisation and a fourth-order
accurate ODE solver would typically be a bad choice.
The error in the resulting solution would then be dominated by the error in the spatial discretisation, meaning that
more accurate solutions could be obtained by using a finer spatial discretisation and a lower-order ODE solver.
In the experiments performed in this work the goal is to train neural networks to compensate for the spatial
discretisation error.
This means that choosing a relatively high-order accurate ODE solver is necessary to ensure that the temporal
discretisation error does not contribute significantly to the overall error.

\begin{figure}
    \centering
    \begin{subfigure}[b]{0.48\textwidth}
        \includegraphics[width=\textwidth]{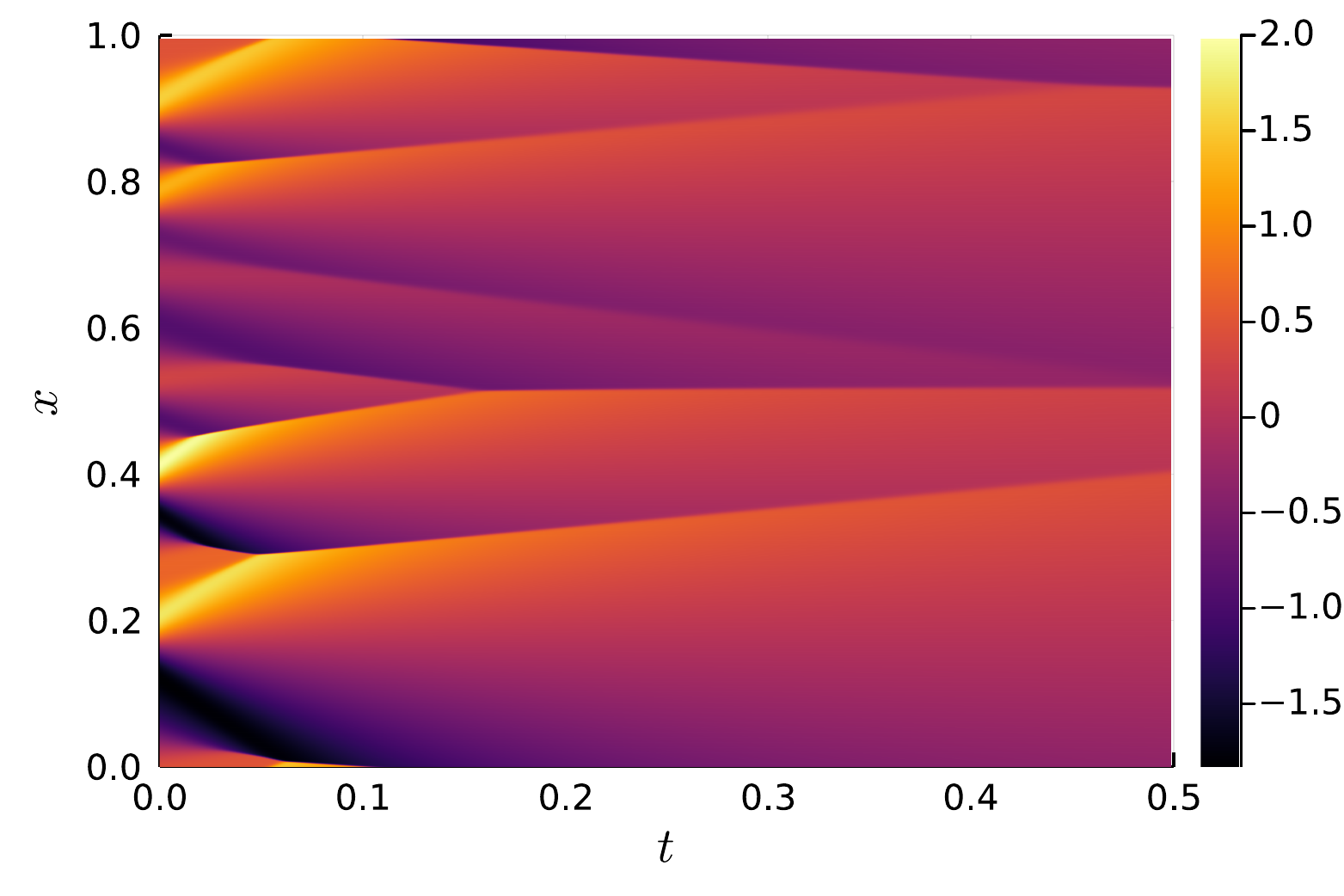}
        \caption{A numerical solution to Burgers' equation on the fine grid (\(N_x = 4096\)).}
        \label{subfig:burgers-data-example-fine}
    \end{subfigure}
    \hfill
    \begin{subfigure}[b]{0.48\textwidth}
        \includegraphics[width=\textwidth]{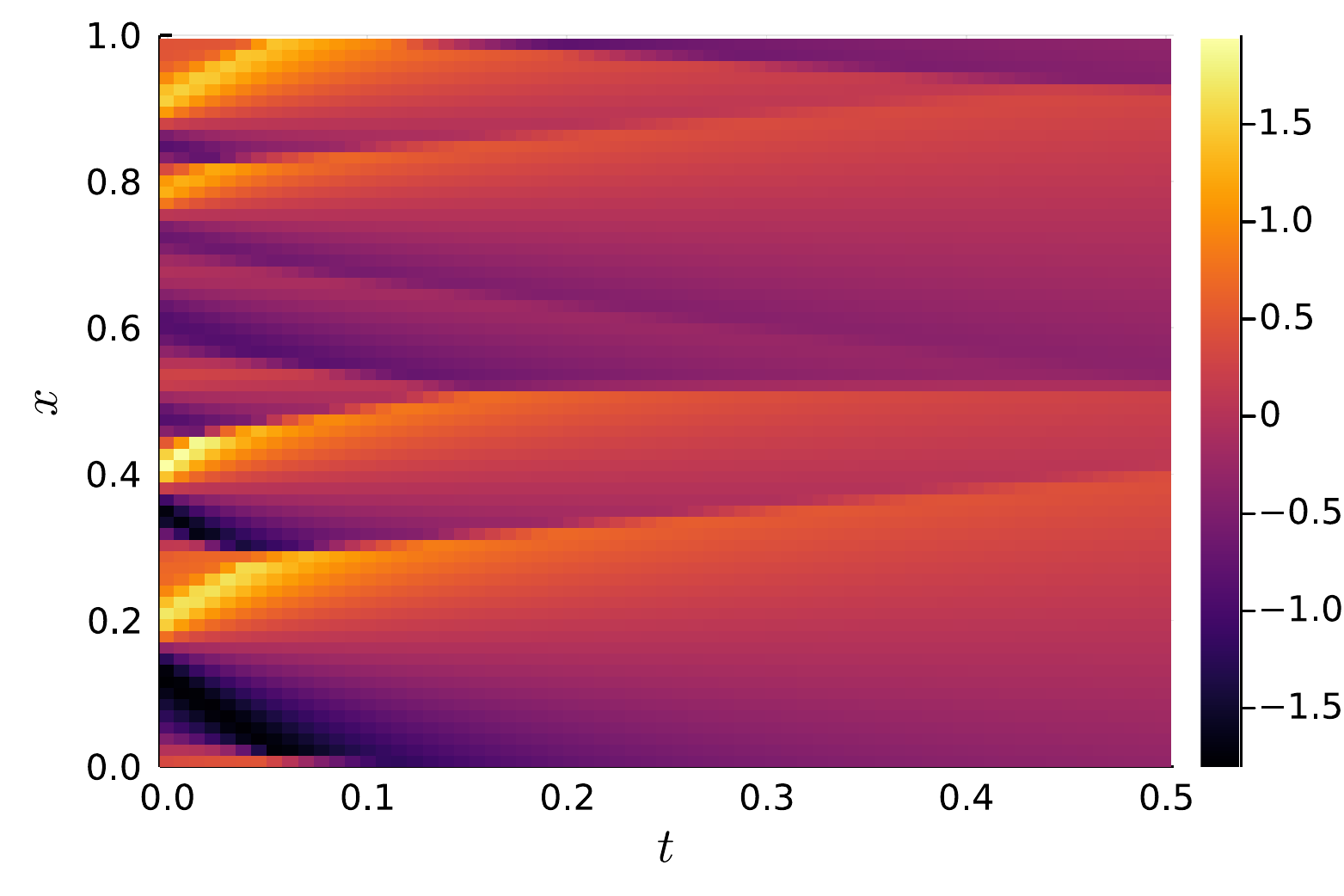}
        \caption{This same solution, downsampled to the coarse grid (\(N_x = 64\)).}
        \label{subfig:burgers-data-example-fine-downsampled}
    \end{subfigure}
    \begin{subfigure}[b]{0.48\textwidth}
        \includegraphics[width=\textwidth]{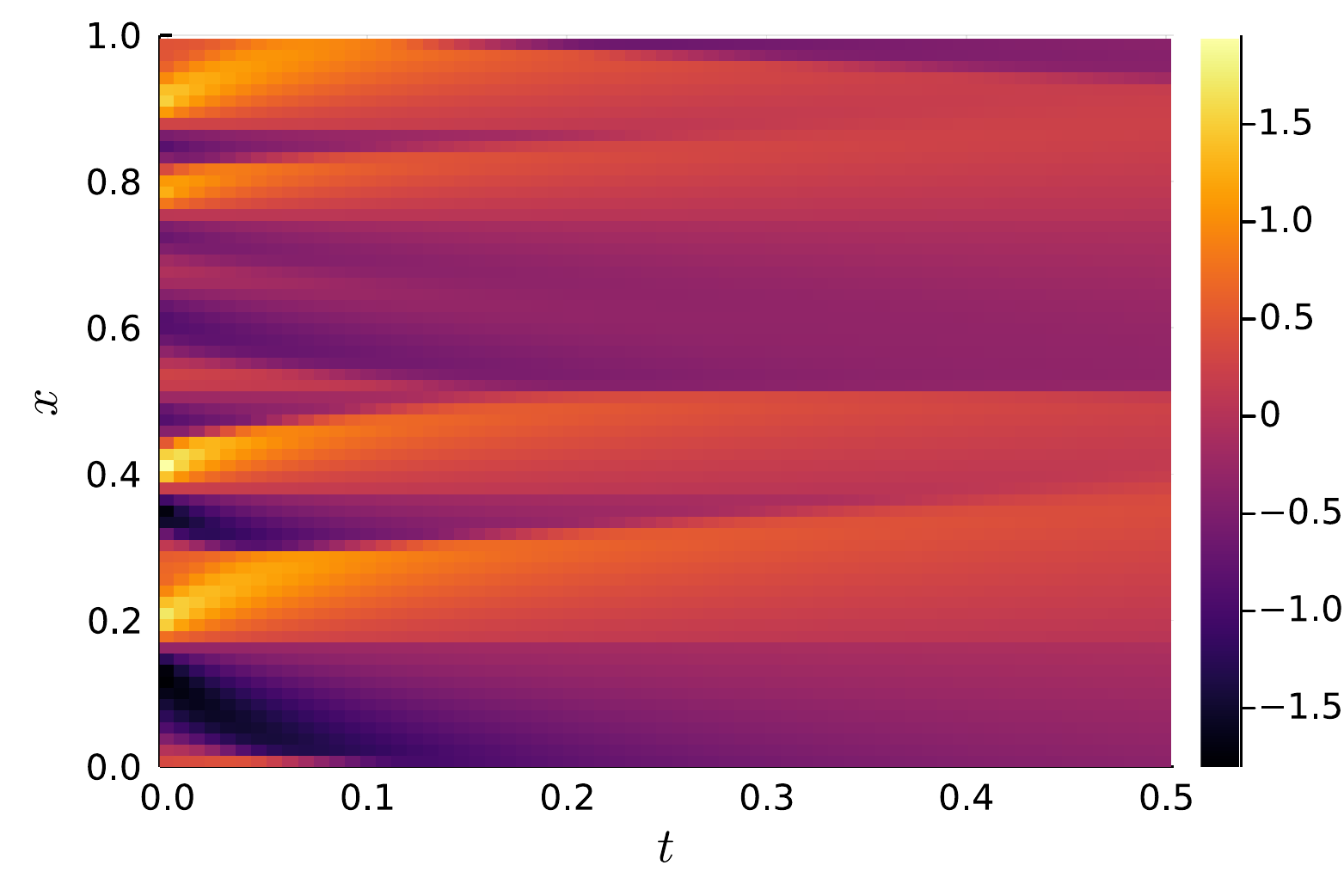}
        \caption{The solution of the ODE on the coarse grid, starting from the downsampled initial condition.}
        \label{subfig:burgers-data-example-coarse}
    \end{subfigure}
    \hfill
    \begin{subfigure}[b]{0.48\textwidth}
        \includegraphics[width=\textwidth]{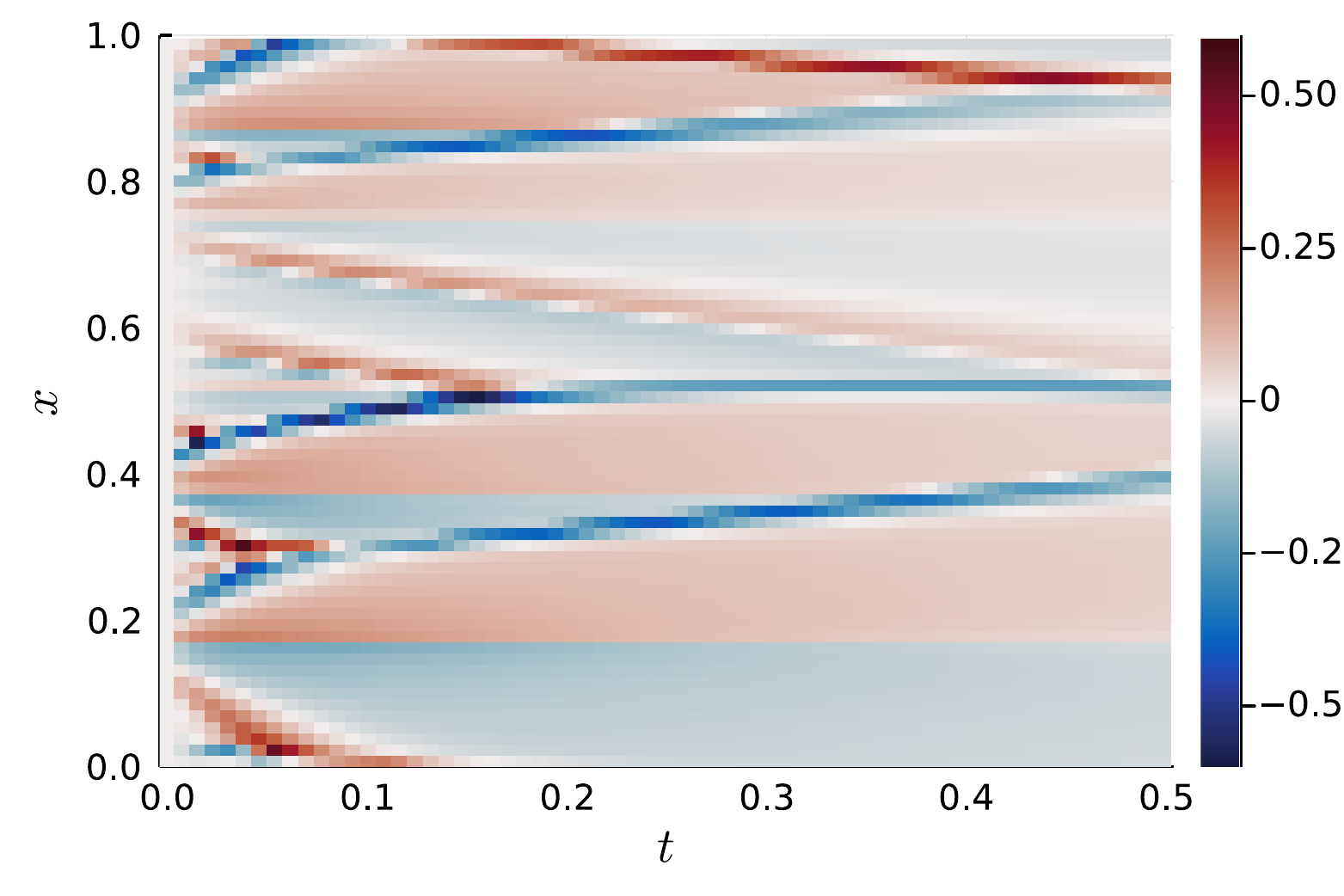}
        \caption{The error between the downsampled fine-grid solution and the coarse grid solution.}
        \label{subfig:burgers-data-example-coarse-error}
    \end{subfigure}
    \caption{
        An example showing how solving a coarse-grid solution to Burgers' equation does not yield the same solution as
        down-sampling the fine-grid solution to the equation.
        A closure model can be added to the coarse discretisation to bring its solution
        (\Cref{subfig:burgers-data-example-coarse}) closer to the fine-grid solution
        (\Cref{subfig:burgers-data-example-fine}).
    }
    \label{fig:burgers-data-example}
\end{figure}

This process is illustrated in \Cref{fig:burgers-data-example}.
This figure also shows the result of solving the coarsely discretised ODE starting from the downsampled initial state
\(\vec{u}(0) \in \R^{64}\) (\Cref{subfig:burgers-data-example-coarse}).
Crucially, this is not equal to the downsampled fine-grid solution
(\Cref{subfig:burgers-data-example-fine-downsampled}).
The downsampled fine-grid solution is by definition the most accurate low-fidelity approximation to the original
high-fidelity data.
The solution of the low-fidelity ODE introduces additional errors (\Cref{subfig:burgers-data-example-coarse-error}),
and is therefore a less accurate approximation to the original data than the downsampled high-fidelity solution.

\subsection{\ks}
\label{sec:data-generation/kuramoto-sivashinsky}
The \ks equation is named after the two researchers who independently derived the equation in
Kuramoto~\cite{kuramoto1978diffusion} and Sivashinsky~\cite{sivashinsky1977nonlinear}.
This PDE is taken with periodic boundary conditions as well:
\begin{subequations}
    \begin{align}
        \label{eq:ks-equation}
        \pd{u}{t} &= -\frac{1}{2}\pd{}{x}\left( u^2 \right) - \pd[2]{u}{x} - \pd[4]{u}{x}, \\
        u(x + L, t) &= u(x, t).
    \end{align}
\end{subequations}

The Lyapunov exponent of \eqref{eq:ks-equation} depends on the length \(L\) of the domain.
Edson \etal\cite{edson2019lyapunov} found the following approximation for the Lyapunov eigenvalues for varying \(L\):
\begin{align*}
    \lambda_i(L) &\approx 0.093 - \frac{0.94}{L}\left( i - 0.39 \right), ~i = 1, 2, \dots \\
    \implies \lambda_{\text{max}}(L) &\approx 0.093 - \frac{0.57}{L}.
\end{align*}
The training data is created with \(L = 64\), leading to a Lyapunov exponent of \(\lambda_{\text{max}} \approx 0.084\).
The inverse of the Lyapunov exponent is the Lyapunov time \(\tlyap\), which can be loosely interpreted
as the time it takes for the error between two similar trajectories to grow by a factor \(e\).
Taking \(L = 64\) yields \(\tlyap \approx 12\).

In this PDE, the time-dependent behaviour of \(u\) is governed by three terms:
\begin{itemize}
    \item
    A non-linear convection term \(-\frac{1}{2}\pd{}{x}\left( u^2 \right)\), the same as in Burgers' equation.

    \item
    A destabilising anti-diffusion term \(-\pd[2]{u}{x}\).
    Note that this term appears on the right-hand with a minus sign, which is unusual for diffusion terms.

    \item
    A stabilising hyper-diffusion term \(-\pd[4]{u}{x}\).
    Without this term, the equation would be ill-posed since the anti-diffusion term would cause solutions to blow up in
    a finite amount of time.
\end{itemize}

Solutions to the \ks equation are created by solving the PDE with domain length \(L = 64\).
As is the case for Burgers' equation, for the \ks equation the function \(u(x, t)\) is discretised as
a time-dependent vector \(\vec{u}(t)\).
Since the non-linear convection term is the same as in Burgers' equation, it is discretised in the same way (see
\eqref{eq:quadratic-flux-discretised}) with the exception that the artificial diffusion term (given by the vector
\(\vec{\alpha}\) in \eqref{eq:quadratic-flux-artificial-diffusion}) is no longer needed since the \ks equation produces
smooth solutions.
The linear diffusion and hyper-diffusion terms are discretised using simple 3-wide and 5-wide stencils, respectively,
leading to the following discretisation for the three different terms:
\begin{align*}
    \left( \frac{1}{2}\pd{}{x}\left( u^2 \right) \right)(\vec{x}_i)
    &\to \frac{1}{\Delta x}\left( \vec{f}_{i + 1/2} - \vec{f}_{i - 1/2} \right), \\
    \text{where } \vec{f}_{i + 1/2}
    &= \frac{1}{6}\left( \vec{u}_i^2 + \vec{u}_i\vec{u}_{i + 1} + \vec{u}_{i + 1}^2 \right), \\
    \pd[2]{u}{x}(\vec{x}_i)
    &\to \frac{1}{\Delta x^2}\left( \vec{u}_{i - 1} - 2\vec{u}_i + \vec{u}_{i + 1} \right), \\
    \pd[4]{u}{x}(\vec{x}_i)
    &\to \frac{1}{\Delta x^4}\left( \vec{u}_{i - 2} - 4\vec{u}_{i - 1}
        + 6\vec{u}_i - 4\vec{u}_{i + 1} + \vec{u}_{i + 2} \right).
\end{align*}
Since the PDE is chosen with periodic boundary conditions, the stencils shown here are also applied with periodic
boundary conditions, i.e.~\(\vec{u}_{i + N_x} = \vec{u}_{i}\).
Written out fully, the resulting ODE is:
\begin{multline}
    \label{eq:ks-equation-discretised}
    \d{\vec{u}_i}{t} = f(\vec{u})_i
    = -\frac{1}{6\Delta x}\left( \vec{u}_{i+1}^2 - \vec{u}_{i-1}^2
        + \vec{u}_i\left( \vec{u}_{i+1} - \vec{u}_{i-1} \right) \right) \\
    \quad - \frac{1}{\Delta x^2}\left( \vec{u}_{i - 1} - 2\vec{u}_i + \vec{u}_{i + 1} \right)
        - \frac{1}{\Delta x^4}\left( \vec{u}_{i - 2} - 4\vec{u}_{i - 1}
        + 6\vec{u}_i - 4\vec{u}_{i + 1} + \vec{u}_{i + 2} \right).
\end{multline}

The PDE is discretised with \(N_x = 1024\) finite volumes, and solved from \(t = 0\) to \(t = 256\).
The initial conditions are generated in the same way as for Burgers' equation, see
\eqref{eq:random-smooth-initial-state}.
The ODEs are solved using the \third-order accurate stiff ODE solver \texttt{Rodas4P}~\cite{steinebach1995order},
again following the recommendations from the DifferentialEquations.jl documentation~\cite{recommendedODEsolvers}.
The resulting solutions are downsampled to \(128\) finite volumes in space with a time step of
\(\Delta t = \frac{1}{2}\) in between snapshots.
To avoid the effects of initial transients, the first 32 snapshots of the trajectories are not used for training.
An example trajectory from the resulting training data is shown in \Cref{fig:example-ks-trajectory}.
\begin{figure}
    \centering
    \includegraphics[width=0.70\textwidth]{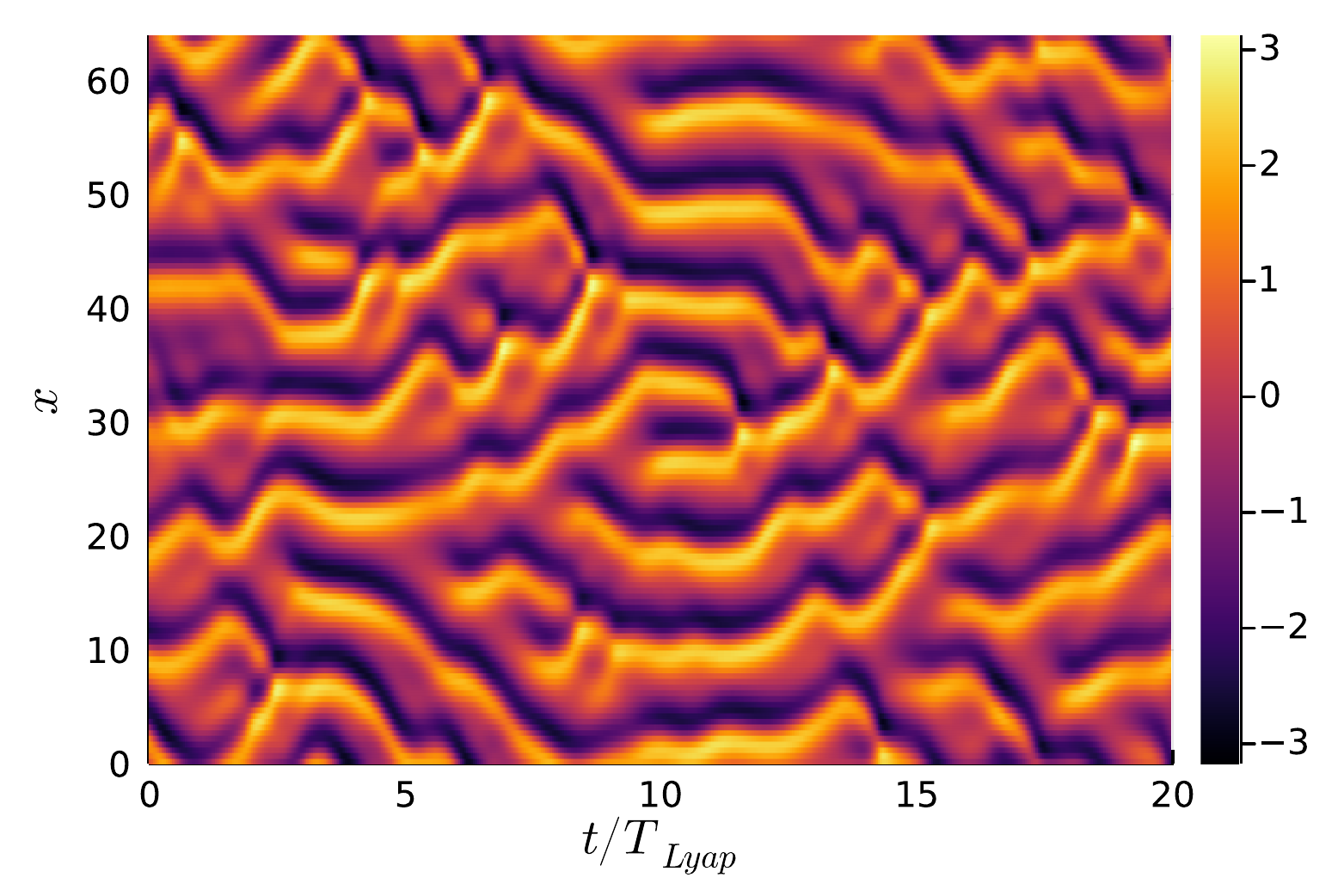}
    \caption{
        An example trajectory from the \ks equation used for training models.
    }
    \label{fig:example-ks-trajectory}
\end{figure}

Of the 100 generated trajectories, the first 80 are used for training and the last 10 are used for testing.
The remaining 10 trajectories are used to evaluate models while they are being trained, to ensure models are trained
for enough iterations but without overfitting.

\section{Training details}
\label{sec:training-details}

\subsection{Neural network architectures}
\label{sec:training-details/nn-architectures}
To ensure that the experiments only test the effect of including prior knowledge, only two different neural networks
are used for all numerical experiments: a `small' convolutional neural network with 57 parameters, and a `large'
convolutional neural network with 533 parameters.

Experiments for the Burgers equation are done only with the smaller of the two neural networks.
Experiments for the \ks equation are done only with the larger neural network.
Note that both neural networks, especially the small neural network, are very small compared to networks used in modern
machine learning applications.
However, as seen from the results, both neural networks are large enough to significantly improve the accuracy over the
coarse ODE without closure term.

The neural network structures are summarised in \Cref{table:small-nn,table:large-nn}.
Note that both neural networks have an initial layer that extends the input vector \(\vec{u}\) with its component-wise
square \(\vec{u}_i^2\).
This is done since the true right-hand sides of both PDEs \eqref{eq:burgers2} and \eqref{eq:ks-equation} contain an
advection term that depends directly on \(u^2\).
As such, passing the values \(\vec{u}_i^2\) to the first convolutional layer is expected to improve the ability of the
network to learn the closure term.
Also note that the single bias parameter in the last convolutional layer of both models is actually meaningless: its
value does not affect the output of the model due to the \(\fwddiff\) layer.
This layer ensures that the entries of the neural network output always sum to zero, which is a property that is also
satisfied by the training data.
Enforcing this property in the neural network was found to result in a small but consistent improvement in accuracy, see
Chapter 4 of Melchers~\cite{mastersthesis}.

\begin{table}
    \centering
    \caption{
        A description of the small neural network structure used for the experiments on Burgers' equation.
    }
    \label{table:small-nn}
    \vspace{1em}
    \begin{tabular}{c c c l}
        \toprule
        Layer & Description & \(\sigma\) & Parameters \\
        \midrule
        1 & \(\vec{u} \mapsto \begin{bmatrix} \left( \vec{u}_i \right)_i & \left( \vec{u}_i^2 \right)_i \end{bmatrix}\) & - & \phantom{2}0 \\
        2 & \(9\)-wide conv, \(2 \to 2\) channels & \(\tanh\) & 38 (\(9 \times 2 \times 2\) weights and \(2\) biases) \\
        3 & \(9\)-wide conv, \(2 \to 1\) channels & -         & 19 (\(9 \times 2 \times 1\) weights and \(1\) bias) \\
        8 & \(\vec{u} \mapsto \fwddiff\vec{u}\)   & \(-\)     & \phantom{1}0 \\
        \midrule
        Total & & & 57 \\
        \bottomrule
    \end{tabular}
    \vspace{3em}
    \caption{
        A description of the large neural network structure used for the experiments on the \ks equation.
    }
    \label{table:large-nn}
    \vspace{1em}
    \begin{tabular}{c c c l}
        \toprule
        Layer & Description & \(\sigma\) & Parameters \\
        \midrule
        1 & \(\vec{u} \mapsto \begin{bmatrix} \left( \vec{u}_i \right)_i & \left( \vec{u}_i^2 \right)_i \end{bmatrix}\) & - & \phantom{12}0 \\
        2 & \(5\)-wide conv, \(2 \to 4\) channels & \(\tanh\) & \phantom{1}44 (\(5 \times 2 \times 4\) weights and \(4\) biases) \\
        3 & \(5\)-wide conv, \(4 \to 6\) channels & \(\tanh\) &           126 (\(5 \times 4 \times 6\) weights and \(6\) biases) \\
        4 & \(5\)-wide conv, \(6 \to 6\) channels & \(\tanh\) &           186 (\(5 \times 6 \times 6\) weights and \(6\) biases) \\
        5 & \(5\)-wide conv, \(6 \to 4\) channels & \(\tanh\) &           124 (\(5 \times 6 \times 4\) weights and \(4\) biases) \\
        6 & \(5\)-wide conv, \(4 \to 2\) channels & \(\tanh\) & \phantom{1}42 (\(5 \times 4 \times 2\) weights and \(2\) biases) \\
        7 & \(5\)-wide conv, \(2 \to 1\) channels & \(-\)     & \phantom{1}11 (\(5 \times 2 \times 1\) weights and \(1\) bias) \\
        8 & \(\vec{u} \mapsto \fwddiff\vec{u}\)   & \(-\)     & \phantom{12}0 \\
        \midrule
        Total & & & 533 \\
        \bottomrule
    \end{tabular}
\end{table}
All neural networks are trained using the Adam optimiser~\cite{kingma2014adam} with a learning rate of \(10^{-3}\).

\subsection{\Discopt for the \ks equation}
\label{sec:ks-pseudospectral}
As mentioned in \Cref{sec:gradient-forms/discopt}, the \discopt approach requires the use of a differentiable
ODE solver.
This is not a problem for Burgers' equation, but does pose a problem for the stiff \ks equation since
stiff equations are typically solved using implicit methods, which are not trivial to back-propagate through.
Note that back-propagating through implicit methods is possible, since the gradient of an implicitly defined function
(i.e.~a function whose output is defined as the solution of a system of equations) can be computed by the implicit
function theorem, as demonstrated by Kolter \etal\cite{implicitlayers}.
Nevertheless, explicit ODE solvers are preferable over implicit methods whenever they are applicable, due to their
simplicity and speed, as well as the property that back-propagation through explicit methods is comparatively easy.

For some problems including the \ks equation, exponential time differencing Runge-Kutta methods are suitable.
These methods assume a stiff but linear term which can be solved exactly using exponentials, combined with a
non-stiff non-linear term that can be taken into account using multiple stages, similar to how standard explicit
Runge-Kutta methods achieve higher orders of accuracy.
Exponential integrators of orders 2, 3, and 4 were derived by Cox and Matthews~\cite{cox2002exponential}.
A numerically stable way to compute the coefficients required by these methods was presented by Kassam and
Trefethen~\cite{kassam2005fourth} and demonstrated on the four-stage fourth-order accurate method \texttt{ETDRK4}.
The resulting algorithm was found to perform very well on a variety of problems including the \ks equation and will
therefore be used here.

Exponential integrators for the \ks equation are most efficient when the PDE is solved in the pseudo-spectral domain,
meaning that the ODE is not over the variables \(\vec{u}(t)\), but over their discrete Fourier transform \(\hat{\vec{u}}
(t) \bydef \mathcal{F}\vec{u}(t)\) (the Fourier transform is only performed over space, not over time).
Transforming the equation in this way yields the following ODE system:
\begin{align}
    \label{eq:ks-pseudo-spectral-no-closure}
    \d{}{t}\hat{\vec{u}} &=
    \left( \mat{\Lambda}^2 - \mat{\Lambda}^4 \right)\hat{\vec{u}}
        - \frac{i}{2}\mat{\Lambda}\mathcal{F}\left( \left( \mathcal{F}^{-1}\hat{\vec{u}} \right)^2 \right),
\end{align}
where \(\mat{\Lambda}\) is a diagonal matrix \(\mat{\Lambda} = \text{diag}\left( \lambda_0, \lambda_1, \dots,
\lambda_{N_x - 1} \right)\) where
\begin{align*}
    \lambda_k &= \begin{cases}
        \frac{2\pi k}{L} &\text{for } 0 \leq k < \frac{N_x}{2}, \\
        0 &\text{for } k = \frac{N_x}{2}, \\
        \frac{2\pi}{L}(k - N_x) &\text{for } \frac{N_x}{2} < k \leq N_x - 1.
    \end{cases}
\end{align*}

\end{document}